\documentclass{article}

\PassOptionsToPackage{square, numbers}{natbib}

\usepackage{adjustbox}
\usepackage{microtype}
\usepackage{graphicx}
\usepackage{subcaption}
\usepackage{booktabs} 
\usepackage{hyperref}
\usepackage{enumitem}
\usepackage{xcolor}
\definecolor{entcolor}{rgb}{0.13, 0.55, 0.13}
\newcommand{\ent}[1]{\textcolor{entcolor}{#1}}

\usepackage{amsmath}
\usepackage{amssymb}
\usepackage{mathtools}
\usepackage{amsthm}
\usepackage{xspace}
\usepackage{float}
\usepackage{algorithm}
\usepackage{algorithmic}

\usepackage[capitalize,noabbrev]{cleveref}

\usepackage[preprint]{neurips_2026}
\usepackage{notation}

\usepackage[textsize=tiny]{todonotes}

\usepackage[utf8]{inputenc} % allow utf-8 input
\usepackage[T1]{fontenc}    % use 8-bit T1 fonts
\usepackage{hyperref}       % hyperlinks
\usepackage{url}            % simple URL typesetting
\usepackage{booktabs}       % professional-quality tables
\usepackage{amsfonts}       % blackboard math symbols
\usepackage{nicefrac}       % compact symbols for 1/2, etc.
\usepackage{microtype}      % microtypography
\usepackage{xcolor}         % colors

\newtheorem{lemma}{Lemma}
\newtheorem{proposition}[lemma]{Proposition}

\title{Proximal Policy Optimization \\ for Amortized Discrete Sampling}

\author{%
  Anna Zykova-Myzina$^*$ \\
  HSE University\\
  \texttt{azykova.myzina@gmail.com} \\
   \And
   Timofei Gritsaev$^*$ \\
   Constructor University \\
   \texttt{tgritsaev@gmail.com} \\
   \AND
   Daniil Tiapkin$^\dagger$ \\
   CMAP, CNRS, École polytechnique, IPP \\
   \texttt{daniil.tiapkin@polytechnique.edu} \\
   \And
   Nikita Morozov$^\dagger$ \\
   HSE University \\
   \texttt{nvmorozov@hse.ru} \\
}

\begin{document}

\maketitle

\begin{abstract}
  This paper explores policy gradient algorithms for training stochastic policies to sample from structured discrete probability distributions under the Generative Flow Network (GFlowNet) framework. Building on extensive theoretical connections between GFlowNets and entropy-regularized reinforcement learning, we derive equivalents of standard policy gradient algorithms for training GFlowNets, as well as experimentally explore their various methodological aspects, including baseline training and advantage estimation. Most importantly, our work is the first to derive and successfully apply proximal policy optimization to GFlowNets, showing its improved convergence speed and data efficiency compared to standard GFlowNet training objectives on benchmarks ranging from synthetic energies to molecular graph generation.
\end{abstract}

\section{Introduction}

In this paper, we consider a task of sampling from a probability distribution over a finite discrete space $\cX$ with probability mass function $p(x) = \cR(x) / \rmZ$, where $\cR\colon \cX \to \mathbb{R}_{>0}$ is a positive reward function and $\rmZ = \sum_{x \in \cX} \cR(x)$ is an unknown normalizing constant. The task arises in many domains, including statistical physics~\cite{potts1952some, wolff1989collective}, bayesian statistics~\cite{madigan1995bayesian, green1995reversible}, computational biology~\cite{yang1997bayesian, ding2003statistical} and natural language processing~\cite{griffiths2002probabilistic, miao2019cgmh}. Generative Flow Networks (GFlowNets, \cite{bengio2021flow}) are designed to solve this task when the space of interest $\cX$ has a compositional structure: any object $x$ can be sequentially constructed from "building blocks". This construction process induces a directed acyclic graph (DAG) environment, and GFlowNets train a stochastic policy to navigate it in order to match the distribution of interest $p(x)$ over the terminal states.

A line of work~\cite{tiapkin2024generative, deleu2024discrete} uncovered deep theoretical connections between GFlowNets and entropy-regularized reinforcement learning~\cite{haarnoja2017reinforcement, schulman2017equivalence}, showing that in essence, GFlowNet framework utilizes reinforcement learning as a tool to train the policy for sampling. These works proved that, if intermediate rewards are appropriately defined in the DAG environment, the optimal policy in the entropy-regularized RL task coincides with the sampling policy GFlowNets aim to recover. Moreover, it was shown that existing GFlowNet training algorithms like Trajectory Balance~\citep[TB]{malkin2022trajectory}, Detailed Balance~\citep[DB]{bengio2023gflownet} and Subtrajectory Balance~\citep[SubTB]{madan2023learning} can be viewed as variations of value-based RL algorithms SoftDQN~\cite{haarnoja2017reinforcement} and PCL~\cite{nachum2017bridging}. Most notably, this perspective showed that a large variety of existing RL algorithms can be directly applied to train policies for solving sampling tasks.

% Потенциально хороший наброс из одного описания моих рисерч проектов 
% Existing GFlowNet training algorithms, e.g. SubTB \cite{madan2023learning}, are primarily value-based, learning state-dependent optimal value functions (which coincide with the flow functions as shown in~\citep{tiapkin2024generative}) and allowing for off-policy training. However, GFlowNet environments by design allow for efficient parallel rollouts, making algorithms like PPO and GRPO~\cite{shao2024deepseekmath} more scalable and preferable for stable large-scale training, in comparison to more sensitive value-based algorithms, especially in applications to LLMs.

GFlowNet training has been dominated by value-based objectives~\citep{bengio2023gflownet,malkin2022trajectory}. Policy gradient methods such as Proximal Policy Optimization (PPO,~\citealt{schulman2017proximal}) remain comparatively underexplored for sampling, despite being the dominant approach to large-scale RL post-training of language models~\citep{ouyang2022training,guo2025deepseek,yu2025dapo} and diffusion models~\citep{black2023training,fan2023dpok}, where they have proven to scale to large models and difficult problems. 
% We expect this scalability and performance to carry over to sampling, making policy gradient methods competitive or superior to established value-based objectives.

Beyond this empirical track record, policy gradient methods differ from value-based GFlowNet objectives fundamentally: rather than learning a globally consistent flow or partition function whose fixed point matches the target distribution, they iteratively improve the current policy using estimates of its own value. This distinction matters in practice. TB must jointly estimate the log-partition function $\log \rmZ$, and training is inefficient as long as this estimate is inaccurate~\citep{malkin2023gflownets} — a significant obstacle in large-scale problems where the $\log \rmZ$ estimation poses a significant challenge itself. DB learns a flow function at every state and suffers from credit assignment difficulties over long trajectories~\citep{malkin2022trajectory}.
Policy gradient methods sidestep such over-reliance on bootstrapping: the objective they need is the expected return of the current policy, which can be estimated directly from rollouts, without requiring any globally consistent object (partition function or flow) that must be accurate across the whole DAG, and they enable an effective bias-variance trade-off.

Existing work on policy gradients for GFlowNets~\citep{pmlr-v235-niu24c} adopts a reward formulation in which the per-step reward depends on the current forward policy, making the reward function itself non-stationary. This complicates theoretical understanding and, in particular, obstructs a clean derivation of multi-update methods such as PPO. In contrast, our work studies policy gradient algorithms within the theoretical framework of~\cite{tiapkin2024generative}, starting from the soft policy improvement theorem \citep{haarnoja2017reinforcement,chan2022greedification} and culminating in proposing a variant of proximal policy optimization~\cite{schulman2017proximal} as an approximate version of conservative policy iteration~\citep{kakade2002approximately} for training GFlowNets, as well as exploring various methodological aspects. We highlight our main contributions:

\begin{enumerate}[leftmargin=*, itemsep=0pt,topsep=-4pt]
    \item We derive standard policy gradient objectives for training discrete samplers with Entropy-Regularized RL under the theoretical framework of~\cite{tiapkin2024generative}. We experimentally study various design choices related to policy gradient training, including variance reduction with value baselines and advantage estimation~\cite{Schulman2015HighDimensionalCC}, showing their crucial role in stable and efficient training of GFlowNets. 
    % We also consider the value training algorithm from the work that previously attempted applying policy gradients to GFlowNets~\cite{niu2026evaluating}, empirically showing that it may be suboptimal in comparison to approaches established in RL literature;
    \item We derive Ent-PPO, a variant of PPO adapted to the soft-RL formulation of GFlowNet sampling. The derivation makes explicit two ingredients that any principled application of PPO to GFlowNet sampling must include, both of which were absent from the original attempt of~\cite{bengio2021flow}: the intermediate per-step rewards $r(s_t, s_{t+1}) = \log \PB(s_t \mid s_{t+1})$ identified by~\cite{tiapkin2024generative}, which make the soft-RL objective equivalent to the sampling problem; and the entropy regularization required by that equivalence, which in our derivation combines with an importance-weighted cross-entropy against the behavior policy to form an analytic KL trust region, replacing the heuristic entropy bonus of standard PPO. Without these corrections, PPO targets the wrong objective: with terminal rewards alone and no entropy term, its optimum is $\arg\max_x \cR(x)$ rather than the sampling target $\cR(x)/\rmZ$. We confirm this empirically and show that Ent-PPO improves over both naive PPO and the established off-policy GFlowNet objectives~\citep{malkin2022trajectory, bengio2023gflownet,madan2023learning} across a range of benchmarks.
    %To our knowledge, our work is the first to successfully apply PPO~\cite{schulman2017proximal} to GFlowNet training. We argue that it is crucial to theoretically account for the entropy term, guaranteeing that the objective recovers an unbiased optimal policy, which is done in our work by adding a direct KL penalty into the objective. We compare against the established GFlowNet training objectives~\cite{malkin2022trajectory, bengio2023gflownet, madan2023learning}, showing improved convergence across a number of benchmarks.
\end{enumerate}
    
%    \textbf{Contribution}
%    \begin{enumerate}
%        \item We consider Entropy-regularized RL with fixed rewards, which is in contrast to cite{bad works from our enemies}. In this setting, we are first to apply policy gradients methods to discrete sampling. 
%        \item We are the first to successfully train PPO for sampling. While cite{ first gfn paper} considered PPO, they did not use intermediate rewards associated with a backward policy, leading to a biased and collapsed policy.
%        \item We methodologically study PG algorithms. In particular, we studied variance reduction techniques, approaches to learn value, and Generalised Advantage Estimation.  
%    \end{enumerate}

Source code: \href{https://github.com/tgritsaev/ent-ppo}{github.com/tgritsaev/ent-ppo}.

\vspace{-1pt}

\section{Background}

\vspace{-2pt}

\subsection{Reinforcement Learning and Policy Gradients}\label{sec:rl_background}

\vspace{-1pt}

This subsection collects the RL background needed for the methodology: the Markov decision process, value functions, entropy-regularized RL, soft policy improvement, and the two policy gradient algorithms we adapt — VPG and PPO.

\textbf{Markov decision process.} 
The basic object of RL is a Markov decision process (MDP), specified by a tuple $\cM = (\cS, \cA, \MK, r, \gamma, \sinit)$, where $\cS$ and $\cA$ are finite state and action spaces, $\MK$ is a Markovian transition kernel, $r:\cS\times\cA\to\mathbb R$ is a bounded reward, $\gamma \in [0,1]$ is a discount factor, and $\sinit \in \cS$ is a 
fixed initial state. Each state $s$ has a feasible action set $\cA_s\subseteq\cA$. A (Markovian) policy assigns to each $s$ a distribution $\pi(\cdot\mid s)\in\Delta(\cA_s)$. In deterministic MDPs — including the GFlowNet setting of \cref{sec:gfn_background} — every action is identified with its successor state, in which case we write $r(s,s')$ and $\pi(s'\mid s)$ interchangeably. For episodic problems 
we adopt the convention of an absorbing terminal state with zero reward, so all returns considered below are finite even when $\gamma = 1$.

\textbf{Value functions.}
The performance of a policy $\pi$ is measured by the (soft) value
\begin{equation}\label{eq:value_def}
    \textstyle V^\pi_{\ent\alpha}(s) \triangleq 
    \E_\pi\!\left[\sum_{t=0}^\infty \gamma^t \big(r(s_t,a_t) 
    \ent{{} + \alpha \cH(\pi(\cdot\mid s_t))} \big) \,\Big|\, 
    s_0 = s\right]\!,
\end{equation}
where $\alpha \geq 0$ controls entropy regularization 
\citep{neu2017unified, geist2019theory}: \ent{$\alpha > 0$ gives the entropy-regularized case}, $\alpha = 0$ recovers classical RL. 
The associated $Q$-function and advantage:
\begin{align}\label{eq:qfunction_advantage}
    \textstyle Q^\pi_{\ent\alpha}(s, a) \triangleq r(s, a) + \gamma 
    \E_{s' \sim \MK(s, a)}\!\left[V^\pi_{\ent\alpha}(s')\right] 
    \quad
    A^\pi_{\ent\alpha}(s, a) \triangleq Q^\pi_{\ent\alpha}(s, a) 
    - V^\pi_{\ent\alpha}(s) \ent{{} - \alpha \log \pi(a \mid s)},
\end{align}
where the advantage definition follows~\citep[Definition 5]
{chan2022greedification} and is normalized so that 
$\E_{a\sim\pi}[A^\pi_{\ent\alpha}(s,a)] = 0$. Using $\E_{a\sim\pi}
[-\log\pi(a\mid s_t)] = \cH(\pi(\cdot\mid s_t))$, the soft value also 
admits the sample-path form $V^\pi_\alpha(s) = \E_\pi\!\big[
\sum_t \gamma^t (r(s_t, a_t) - \alpha \log\pi(a_t\mid s_t)) \mid 
s_0 = s\big]$, used in the appendix derivations. We write 
$J_{\ent\alpha}(\pi) \triangleq V^\pi_{\ent\alpha}(\sinit)$. A 
\ent{(soft-)} optimal policy $\pistar_{\ent\alpha}$ maximizes 
$V^\pi_{\ent\alpha}(s)$ at every $s$.

\textbf{\ent{(Soft)} policy improvement.}
Given any $\pi_{\text{old}}$, the policy 
\begin{equation}\label{eq:soft_pi_state}
    \textstyle \pi_{\text{new}}(\cdot\mid s) \in 
    {\arg\max}_{\pi(\cdot\mid s)\in\Delta(\cA_s)}
    {\E}_{a\sim\pi(\cdot\mid s)}\!\left[Q^{\pi_{\text{old}}}_{\ent\alpha}
    (s,a)\right] \ent{{} + \alpha\,\cH(\pi(\cdot\mid s))}
\end{equation}
satisfies $V^{\pi_{\text{new}}}_{\ent\alpha}(s) \geq 
V^{\pi_{\text{old}}}_{\ent\alpha}(s)$ for every $s$. At $\alpha = 0$ 
this is the classical greedy improvement theorem 
\citep{kakade2002approximately}; at \ent{$\alpha > 0$} it is its soft 
analogue \citep[Section 5]{chan2022greedification}. Denote the 
right-hand side of~\eqref{eq:soft_pi_state} by 
$\cG^{\pi_{\text{old}}}_{\ent\alpha}(\pi; s)$.

\textbf{Vanilla Policy Gradient.}
Policy gradient methods~\citep{williams1991function,
williams1992simple,sutton1999policy} parameterize the policy 
$\pi_\theta$ and directly optimize $J_\alpha(\pi_\theta)$ by 
stochastic gradient ascent. For an episodic trajectory $\tau = 
(s_0, a_0, \dots, a_{T-1}, s_T)$ of random length $T$, the policy 
gradient theorem yields
\begin{equation}\label{eq:pg_theorem}
    \textstyle 
    \nabla_\theta J_\alpha(\pi_\theta) = 
    \E_{\tau \sim \pi_\theta}\!\left[ 
    \sum_{t=0}^{T-1} \nabla_\theta \log \pi_\theta(a_t \mid s_t) 
    \, A^{\pi_\theta}_\alpha(s_t, a_t) \right]\!,
\end{equation}
where $A^{\pi_\theta}_\alpha$ is the soft advantage 
from~\cref{eq:qfunction_advantage}. The advantage 
admits multiple estimators with different bias-variance trade-offs, 
which we study in~\cref{sec:variance_reduction}.

\textbf{Proximal Policy Optimization.} PPO~\citep{schulman2017proximal} improves on Vanilla Policy Gradient (VPG) with markedly better stability and sample efficiency. Inspired by the same principles as Trust-Region Policy Optimization~\citep[TRPO]{schulman2015trust}, it introduces two modifications absent in VPG: a clipping mechanism that stabilizes updates, and importance sampling, which enables multiple off-policy updates on freshly sampled rollouts. Together, these yield the following objective:
\begin{align}
    \textstyle \cL_{\mathrm{PPO}}(\theta; \pi_{\text{old}}) \triangleq 
    \E_{\tau \sim \pi_{\text{old}}}\!\left[\sum_{t=0}^{T-1}
    \mathrm{PPOClip}\bigl(\rho_t(\theta),\, A^{\pi_{\text{old}}}_t
    \bigr)\right],\label{eq:ppo_clip}
\end{align}
where $\mathrm{PPOClip}(\rho, A) \triangleq 
\min\!\bigl( \rho A,\; \mathrm{clip}(\rho,1{-}\epsilon,
1{+}\epsilon)\,A \bigr)$, $\rho_t(\theta) \triangleq \pi_\theta(a_t \mid s_t) / \pi_{\text{old}}(a_t \mid s_t)$, and $\epsilon > 0$ is a small constant. We include a detailed discussion of PPO in \cref{app:ppo} for further clarification.

\subsection{Generative Flow Networks as Soft RL}\label{sec:gfn_background}

In this section, we introduce GFlowNets and recall their equivalence with entropy-regularized RL at $\alpha = \gamma = 1$~\citep{tiapkin2024generative,deleu2024discrete}. 
For detailed derivations of the GFlowNet objects, see~\citep{bengio2023gflownet, malkin2022trajectory}.

\textbf{Directed acyclic graphs.} The generation process in GFlowNets is modelled as a sequence of constructive actions: we begin with an "empty" state and add a new component at each step. Formally, this is described by a finite directed acyclic graph (DAG) $\cG = (\cS, \cE)$, where $\cS$ is a state space and $\cE \subseteq \cS \times \cS$ is a set of edges. For each state $s \in \cS$, the set of actions $\cA_s$ consists of all possible next states: $s' \in \cA_s \Leftrightarrow s \to s' \in \cE$. Each action corresponds to attaching a new component to the object represented by $s$, producing a new object $s'$. When $s' \in \cA_s$, we call $s'$ a child of $s$ and $s$ a parent of $s'$. A designated initial state $\sinit \in \cS$ represents the "empty" object and is the unique state with no incoming edges. Every other state is reachable from $\sinit$, and the set of terminal states, those with no outgoing edges, coincides with the space of interest $\cX$.

\textbf{Markovian flows.}
GFlowNets introduce state flows $\cF(s)$ and edge flows $\cF(s\to s')$.
The training goal is to match terminal flows to rewards,
$\cF(x)=\cR(x)$ for $x\in\cX$; a correct flow then satisfies
$\cF(\sinit)=\rmZ$ and induces the target distribution $\cR(x)/\rmZ$.
The associated forward and backward policies are
% \begin{equation*}
%     \PF(s'\mid s)=\frac{\cF(s\to s')}{\cF(s)}, 
%     \quad
%     \PB(s\mid s')=\frac{\cF(s\to s')}{\cF(s')},
% \end{equation*}
$\PF(s'\mid s)=\frac{\cF(s\to s')}{\cF(s)}$ and $\PB(s\mid s')=\frac{\cF(s\to s')}{\cF(s')}$, where $\cF(s)$ denotes the total outgoing flow at nonterminal states and
the total incoming flow at noninitial states. Thus $\PF$ constructs objects from $\sinit$, while $\PB$ decomposes trajectories in reverse. Equivalently, a Markovian flow is determined by either $(\rmZ,\PF)$ or $(\cR,\PB)$.

\textbf{Training a GFlowNet.} 
Given the reward $\cR$, standard GFlowNet objectives train a forward policy $\PF(\cdot\mid\cdot,\theta)$, together with auxiliary quantities such as a state flow $\cF_\theta$ and, optionally, a backward policy $\PB$, by enforcing flow consistency over the DAG. A unifying objective is Subtrajectory Balance (SubTB,~\citealt{madan2023learning}), which applies to partial trajectories $\tau=(s_m,\ldots,s_n)$:
\begin{equation}\label{eq:SubTB_loss}
    \textstyle 
    \mathcal{L}_{\mathrm{SubTB}}(\tau)
    =
    (
    \log
        (\cF_\theta(s_m)\prod_{t=m}^{n-1}\PF(s_{t+1}\mid s_t,\theta))
    -
        \log(\cF_\theta(s_n)\prod_{t=m}^{n-1}\PB(s_t\mid s_{t+1},\theta))
    )^2,
\end{equation}
with $\cF_\theta(x)=\cR(x)$ at terminal states. DB is the one-step special case $n=m+1$, while TB is the complete-trajectory special case $s_m=\sinit$, $s_n=x\in\cX$, with $\cF_\theta(\sinit)$ identified with $\rmZ_\theta$. SubTB scheme trains on a weighted combination of subtrajectory losses and, in the limits, recovers average DB and TB.
We use DB, TB, and SubTB as GFlowNet baselines in
\cref{sec:experiments}.

\textbf{Forward policy learning as a soft RL problem.}\label{sec:equivalence} \cite{tiapkin2024generative} show that GFlowNet training admits an equivalent entropy-regularized RL formulation when the backward policy $\PB$ is fixed. On the same DAG $\cG$, define an episodic MDP whose action at state $s_t$ is the choice of a child $s_{t+1} \in \mathcal{A}_{s_t}$. Since each action is identified with its successor, we write $s_{t+1}$ in place of $a_t$, so the policy reads $\pi_\theta(s_{t+1} \mid s_t)$. Identify the forward policy with the agent's policy, $\pi_\theta(s_{t+1} \mid s_t) \equiv \PF(s_{t+1} \mid s_t, \theta)$. The per-step reward is defined as
\begin{equation}\label{eq:sampling_reward}
    r(s_t, s_{t+1}) = \begin{cases}
    \log \PB(s_t \mid s_{t+1}), & s_t \notin \mathcal{X} \cup \{s_f\}, \\
    \log \cR(s_t), & s_t \in \mathcal{X}, \\
    0, & s_t = s_f,
    \end{cases}
\end{equation}
where $s_f$ is an absorbing terminal state appended to the DAG. The absorbing state is a standard technical device that extends each terminated GFlowNet trajectory to a varying-horizon one without altering its return. Crucially, once $\PB$ is fixed, the reward~\eqref{eq:sampling_reward} is itself fixed and does not depend on the forward policy. This is in contrast to~\cite{pmlr-v235-niu24c}, where the reward depends on the forward policy.

At $\gamma = 1$ and $\alpha = 1$, the soft-optimal policy under this formulation induces the target distribution $\cR(x) / \rmZ$ over terminals, so the solution to the entropy-regularized RL objective coincides with the solution to the sampling problem. To show this, let $\PB(\tau) = \frac{1}{\rmZ} \cR(s_T) \prod_{t=0}^{T-1} \PB(s_t \mid s_{t+1})$ be the backward-induced target distribution over complete trajectories. The reverse KL decomposes as~\citep[Proposition 1]{tiapkin2024generative}:
$\mathrm{KL}\,\left(\PF \,\|\, \PB\right) = -V^{\pi_\theta}_{\alpha=1}(\sinit) + \log \rmZ$.
Hence, minimizing the reverse KL is equivalent to maximizing the soft value $V^{\pi_\theta}_{\alpha=1}(\sinit)$. At optimum, $V^{\pistar}_{\alpha=1}(\sinit) = \log \rmZ$, and the soft-optimal policy $\pistar_{\alpha=1}$ induces the terminal distribution $\cP(x) \propto \cR(x)$. The dictionary of the two viewpoints is given in~\cref{tab:gfn_rl_dictionary}.

\begin{table}[t]
  \centering
  \small
  \caption{Notation map from GFlowNets to entropy-regularized RL. $\alpha = 1$, $\gamma = 1$. $\pistar_{\alpha=1}$ stands for the optimal regularized policy. For detailed derivation, see~\cite[Appendix B]{tiapkin2024generative}.}
  \label{tab:gfn_rl_dictionary}
  \begin{adjustbox}{max width=\columnwidth}
  \begin{tabular}{@{}lcccccc@{}}
    \toprule
    \textbf{GFlowNets}
    & $\PF(s_{t+1}\mid s_t;\theta)$
    & $\log\PB(s_t\mid s_{t+1})$
    & $\log\cR(s_t)$
    & $\log\rmZ$
    & $\log\cF(s_t)$
    & $\log\cF(s_t\to s_{t+1})$ \\
    % \midrule
    \textbf{Soft RL}
    & $\pi_\theta(s_{t+1}\mid s_t)$
    & $r(s_t,s_{t+1})$
    & $r(s_t,s_f)$
    & $V^{\pistar}_{\alpha=1}(\sinit)$
    & $V^{\pistar}_{\alpha=1}(s_t)$
    & $Q^{\pistar}_{\alpha=1}(s_t,s_{t+1})$ \\
    \bottomrule
  \end{tabular}
  \end{adjustbox}
\end{table}

\textbf{Backward policy.} A unique feature of GFlowNets that distinguishes them from pure soft RL is the ability to learn the backward policy jointly with the forward policy, which often leads to faster convergence~\citep{malkin2022trajectory, gritsaev2025optimizing}. From an RL perspective, this is unusual, since changing $\PB$ changes the reward~\eqref{eq:sampling_reward} itself (see~\cite{gritsaev2025optimizing} for analysis).

\section{Related work}

%\textbf{Off-policy GFlowNets objectives.} Common GFlowNet training objectives are off-policy. This is enabled by learning auxiliary quantities that decouple the loss from the sampling 
%distribution: TB~\citep{malkin2022trajectory} learns the scalar $\log \rmZ$, which can be viewed as the simplest form of baseline~\citep{malkin2023gflownets}; DB~\citep{bengio2023gflownet} learns a state-flow function; SubTB~\citep{madan2023learning} generalises both 
%and requires learning the flow function as well. In contrast, our approach learns neither the partition function nor the flow function: it estimates the value function of the current policy directly from its own rollouts, which makes training on-policy. Prior work~\citep{tiapkin2024generative,deleu2024discrete} that identifies GFlowNet forward-policy training as an Entropy-Regularized RL problem further shows that DB, TB, and SubTB are equivalent to SoftDQN~\citep{haarnoja2017reinforcement} and PCL~\citep{nachum2017bridging}.

%\textbf{Policy Gradient GFlowNets objectives.} 
The original GFlowNet paper~\citep{bengio2021flow} briefly experimented with PPO as a forward-policy training method but reported negative results. Two ingredients were missing. First, the intermediate per-step rewards $\log \PB(s_t \mid s_{t+1})$, identified by~\cite{tiapkin2024generative} as necessary to align the soft-RL optimum with the sampling target, were absent --- their formulation provides only the terminal $\log \cR(x)$, reducing the problem to standard RL with a sparse reward. Second, the entropy regularization required by the soft-RL equivalence was not included. The combination of these omissions makes the optimum of their objective $\arg\max_x \cR(x)$ rather than $\cR(x)/\rmZ$, which is consistent with the mode-collapse behavior they reported. \cref{sec:experiments} confirms this empirically by showing that even with the correct intermediate rewards (\cref{eq:sampling_reward}), removing the entropy term alone causes naive PPO to collapse, particularly with multiple 
update epochs.

\cite{malkin2023gflownets} draw a formal connection between GFlowNets and variational inference, showing that the expected on-policy gradient of TB with respect to the parameters of $\PF$ coincides with the REINFORCE gradient of the reverse KL, $\mathrm{KL}(\PF \,\|\, \PB)$. In our soft-RL terminology, this estimator is the simplest policy gradient (\cref{sec:sgp}) with a \emph{scalar} baseline: the trajectory-level coefficient they multiply with $\nabla_\theta \log \PF(\tau)$ equals (up to sign) the soft return $\sum_t g_t$, and the running-average control variate they subtract is shown to follow the same update rule as the learned $\log \rmZ$ parameter in TB --- giving a clean interpretation of $\log \rmZ$ as a constant baseline for the simplest gradient. They do not study reward-to-go, state-dependent baselines, or advantage estimation, which we show in~\cref{sec:variance_reduction} substantially improve performance.
% \cite{malkin2023gflownets} were the first to consider on-policy algorithms for training GFlowNets. They draw a connection between GFlowNets and variational inference and include experiments with VPG, which they call Reverse KL. They further frame the learned $\log \rmZ$ as a baseline that reduces gradient variance, analogously to the value-function baselines we discuss in~\cref{sec:variance_reduction}. Notably, the expected on-policy gradient of TB coincides with the Reverse KL gradient.

The closest prior work is~\cite{pmlr-v235-niu24c}, which applies VPG to GFlowNet training; they do not extend to PPO, but study TRPO, which we include in our experiments. They adopt a different reward specification: the intermediate reward (for transitions to non-terminal states) is $\log \tfrac{\PF}{\PB}$, whereas we use $\log \PB$ and include the entropy term separately. Their choice makes the reward non-stationary --- it changes as $\PF$ is updated --- which complicates theoretical understanding and obstructs the derivation of trust-region methods such as PPO. Building on this framework, \cite{niu2026evaluating} propose a SubTB-based procedure for training value functions used in policy-based algorithms. Their procedure is complementary to ours: it could be used to train the value function that we use in gradient estimates in~\cref{sec:variance_reduction}. We include it in our variance-reduction experiments in \cref{sec:exp_variance_reduction} alongside standard RL variance reduction techniques~\citep{Schulman2015HighDimensionalCC}.

\section{Methodology}\label{sec:methodology}

This section starts from VPG and discusses techniques from the RL and GFlowNets literature that improve policy-gradient estimation~\citep{sutton1999policy,Schulman2015HighDimensionalCC,niu2026evaluating}. We include the derivations of the subsequent variance reduction approaches only for the sake of completeness in~\cref{app:standard_derivations}. These techniques are well-known in RL literature and can be found in~\citep{sutton1999policy, Schulman2015HighDimensionalCC, williams1992simple}.

Throughout, we specialize to the GFlowNet regime $\alpha = \gamma = 1$. 
To keep formulas light, we write
$
    \tilde V^\pi := V^\pi_{\alpha=1},
    \tilde Q^\pi := Q^\pi_{\alpha=1},
    \tilde A^\pi := A^\pi_{\alpha=1},
$
and $\tilde J(\pi) := \tilde V^\pi(\sinit)$ for the corresponding soft return.

\subsection{Variance reduction in Vanilla Policy Gradient}
\label{sec:variance_reduction}

VPG samples a batch of trajectories from the current policy, 
estimates the policy gradient via~\cref{eq:pg_theorem}, and takes a 
stochastic gradient ascent step. The full derivation is given 
in~\cref{app:vpg}; the resulting identity takes the standard form
\begin{equation}\label{eq:soft_pg}
    \textstyle \nabla_\theta \tilde J(\pi_\theta) = 
    \E_{\pi_\theta}\!\left[ \sum_{t=0}^{T-1} 
    \nabla_\theta \log \pi_\theta(s_{t+1} \mid s_t)\, \Psi_t \right]\!,
\end{equation}
where different choices of $\Psi_t$ yield different unbiased 
estimators of the gradient with different variance. In implementation, 
$\Psi_t$ is detached from the computation graph; the score-function 
identity guarantees that gradients flowing through $\Psi_t$ vanish in 
expectation. We now describe five estimators in order of conceptual development, each building on the previous: the simplest policy gradient, reward-to-go, reward-to-go with a learned value baseline, Generalized Advantage Estimation (GAE), and GAE with Subtrajectory Evaluation Balance~\citep{niu2026evaluating}. Their empirical ranking is given in~\cref{sec:experiments}.

\textbf{Simplest policy gradient.}\label{sec:sgp}
The simplest estimator uses the full-trajectory soft return, which is the same for every gradient term (i.e., does not depend on $t$):
\begin{equation}\label{eq:spg}
    \textstyle \Psi_t = \sum_{k=0}^{T-1} g_k, \qquad
    g_k = r(s_k, s_{k+1}) - \log \pi_\theta(s_{k+1} \mid s_k),
\end{equation}
where $g_k$ is the one-step soft return. Substituting into 
\cref{eq:soft_pg} gives an unbiased estimator of the policy gradient 
(see \cref{app:vpg}), but with high variance. 

% Each gradient term $\nabla_\theta \log \pi_\theta(s_{t+1} \mid s_t)$ is multiplied by the entire trajectory return, including contributions from steps that do not causally depend on the action at step $t$. The next estimator addresses this.

\textbf{Reward-to-go.} \label{sec:rtg}
A lower-variance estimator follows from the observation that rewards obtained before step $t$ do not causally depend on the action at step $t$ and contribute zero in expectation to the corresponding gradient term 
(the derivation in \cref{app:rtg}). Dropping them gives the \emph{reward-to-go}:
\begin{equation}\label{eq:vr_rtg}
    \textstyle \Psi_t = \hat R_t = \sum_{k=t}^{T-1} g_k.
\end{equation}
This estimator is unbiased and has lower variance than the simplest gradient. All subsequent estimators build on reward-to-go, which empirically dominates the simplest form. This finding is consistent with the RL literature.

\textbf{Learning the baseline function.}\label{sec:baseline} Subtracting any function $b(s_t)$ that depends only on the state from 
the return preserves unbiasedness, since $\E_{s_{t+1}}[\nabla_\theta \log \pi_\theta(s_{t+1} \mid s_t)\, b(s_t)] = 0$ 
(see \cref{app:baseline}). Such a $b$ is called a \emph{baseline}.

The standard choice is the on-policy soft value $V^{\pi_\theta}(s_t)$, 
the expected soft return from $s_t$ under $\pi_\theta$. Subtracting it 
from $\hat R_t$ removes the trajectory-level component shared across 
actions and substantially reduces gradient variance. Since 
$V^{\pi_\theta}$ is unknown, we approximate it with a neural network 
$V_\phi(s_t)$, trained alongside the policy to track the value of the 
current policy. This gives the advantage estimator
\begin{equation}\label{eq:vr_baseline}
    \textstyle \Psi_t = \hat A_t = \hat R_t - V_\phi(s_t).
\end{equation}
We train $V_\phi$ by mean-squared-error regression onto the empirical 
soft return $\hat R_t = \sum_{k=t}^{T-1} g_k$, taking a few gradient 
steps per iteration (see \cref{alg:pg}, line 7). Each iteration runs $E$ \emph{epochs} of stochastic gradient descent over the rollout batch, with each epoch splitting the batch into $S$ mini-batches. $E$ and $S$ are hyperparameters we ablate in~\cref{sec:exp_gae}. We use a separate neural network for $V_\phi$, following standard RL 
practice. 
% Common GFlowNet implementations instead share parameters between the policy and flow function~\citep{bengio2023gflownet, madan2023learning}. Exploring such parameter sharing in our setting is left to future work.

\textbf{Generalized Advantage Estimation.}\label{sec:gae}
GAE~\citep{Schulman2015HighDimensionalCC} forms the advantage as 
an exponentially weighted sum of $k$-step soft TD residuals. 
Specialised to $\gamma = 1$, the soft TD residual is
\begin{equation}\label{eq:td}
    \textstyle \delta_t = g_t + V_\phi(s_{t+1}) - V_\phi(s_t),
\end{equation}
with $V_\phi(s_T) := 0$ at the absorbing terminal state. The GAE 
advantage estimator is
\begin{equation}\label{eq:vr_gae}
    \textstyle \hat A_t = \sum_{k=0}^{T-1-t} \lambda^{k}\, \delta_{t+k}, 
    \qquad \lambda \in [0,1].
\end{equation}
The parameter $\lambda$ trades off bias and variance asymmetrically: 
$\lambda = 0$ collapses to the one-step TD residual, which is low-variance 
but biased whenever $V_\phi \neq \tilde{V}^{\pi_\theta}$, while $\lambda = 1$ 
recovers reward-to-go with baseline, unbiased for any $V_\phi$ but 
with full Monte-Carlo variance. Intermediate values interpolate. The 
choice of $\lambda$ has a significant effect on convergence speed in 
our experiments (see \cref{sec:experiments}).

When training with GAE, we replace the Monte-Carlo target for $V_\phi$ 
with the bootstrapped target $\hat V^{\text{tgt}}_t = 
\mathrm{sg}[\hat A_t + V_\phi(s_t)]$, where $\mathrm{sg}$ denotes the 
stop-gradient operator. This keeps the value-function target 
consistent with the bias–variance trade-off that $\lambda$ selects. 
All log-probabilities entering the targets $\hat R_t$ and 
$\hat V^{\text{tgt}}_t$ are evaluated at the rollout-time policy and 
held fixed across update epochs.

\textbf{Subtrajectory Evaluation Balance GAE.}\label{sec:subeb} \citep{niu2026evaluating} replace the bootstrapped Monte-Carlo regression target for $V_\phi$ with a balance-based objective inspired by Sub-TB~\citep{madan2023learning}. They observe that, for a fixed $\pi_\theta$, the true soft value $\tilde V^{\pi_\theta}(s_t)$ is uniquely characterised by the \emph{subtrajectory evaluation balance} (Sub-EB) condition: for all $i < j$,
\begin{equation}\label{eq:subeb}
    \textstyle \E_{P_F(\tau_{i:j})}\left[\log P_F(\tau_{i:j}\mid s_i) + V_\phi(s_i)\right] = \E_{P_F(\tau_{i:j})}\left[\log P_B(\tau_{i:j}\mid s_j) + V_\phi(s_j)\right],
\end{equation}
with the boundary condition $\exp V_\phi(s_T) := R(x)$ at terminal states. The corresponding training loss is a weighted sum of squared log-ratio residuals over all subtrajectories of a rollout,
\begin{equation}\label{eq:subeb_loss}
    \textstyle \mathcal{L}_V(\phi) = \E_{P_F(\tau)}\!\left[\sum_{i<j} w_{j-i}\,\bigl(\delta_V(\tau_{i:j};\phi)\bigr)^2\right], \qquad \delta_V(\tau_{i:j};\phi) = \log\frac{P_F(\tau_{i:j}\mid s_i)\exp V_\phi(s_i)}{P_B(\tau_{i:j}\mid s_j)\exp V_\phi(s_j)},
\end{equation}
with $\theta$ frozen during the critic update. Compared to the bootstrapped MSE target of \cref{eq:vr_baseline}, the Sub-EB loss has two distinguishing features: residuals are formed at the subtrajectory level rather than per edge, and $V_\phi(s_t)$ receives gradient signal from subtrajectories that both start at and end at $s_t$, rather than only those starting at $s_t$. 
% The actor-side GAE estimator (\cref{eq:vr_gae}) is unchanged. 
We include SubEB-GAE as a baseline in \cref{sec:experiments}. In contrast to other variance reduction schemes, this arises from GFlowNet literature, not RL. However, it also may be viewed though RL lens as a optimization of a sum of $\mathrm{TD}(k)$-style estimates, whereas GAE plays a role of a single $\mathrm{TD}(\lambda)$-style estimator~\cite{Schulman2015HighDimensionalCC}.

\subsection{Entropic Proximal Policy Optimization}
\label{sec:ent_ppo}

Standard PPO maximizes expected return with a hand-tuned entropy  bonus. Even with the GFlowNet sampling reward of 
\cref{eq:sampling_reward}, this objective ignores the entropy term  required by the soft-RL equivalence (\cref{sec:equivalence}): its 
optimum concentrates on $\arg\max_{x}\cR(x)$ rather than sampling from $\cR/\rmZ$~\citep{bengio2021flow}. We now derive a variant of  PPO that maximizes the soft return directly, which we call \emph{Entropic PPO} (Ent-PPO). The construction follows the same three steps as for standard PPO~(\cref{app:ppo}), with the per-state PI operator replaced by its soft analogue~\eqref{eq:soft_pi_state}; a new feature is that a KL trust region emerges \emph{from} soft PI rather than being imposed on top. 

\textbf{Soft policy improvement.}
At $\alpha = 1$, the soft PI operator~\eqref{eq:soft_pi_state} 
returns at every state $s$ the maximizer of $\cG^{\pi_{\text{old}}}_{\alpha=1}
(\pi; s) = \E_{a\sim\pi(\cdot\mid s)}[\tilde Q^{\pi_{\text{old}}}
(s, a)] + \cH(\pi(\cdot\mid s))$. Aggregating along trajectories 
drawn from $\pi_{\text{old}}$ yields the parametric surrogate, soft analogue of \eqref{eq:pi-surrogate}:
\begin{equation}\label{eq:soft_pi_aggregate}
    \textstyle 
    \cL_{\text{soft}}(\theta; \pi_{\text{old}}) \triangleq 
    \E_{\tau \sim \pi_{\text{old}}}\left[\sum_{t=0}^{T-1}\!
    \cG^{\pi_{\text{old}}}_{\alpha=1}\!(\pi_\theta;\, s_t)\right]\,.
\end{equation}
% Recall the action-as-successor convention: $\tilde Q^{\pi_{\text{old}}}(s_t, s_{t+1})  = r(s_t, s_{t+1}) + \tilde V^{\pi_{\text{old}}}(s_{t+1})$.

\textbf{KL trust region from the soft advantage.}
Using the soft-advantage identity $\tilde Q^{\pi_{\text{old}}} = 
\tilde A^{\pi_{\text{old}}} + \tilde V^{\pi_{\text{old}}} + 
\log\pi_{\text{old}}$ at $\alpha = 1$ (rearranging 
\cref{eq:qfunction_advantage}) and dropping the 
$\theta$-independent baseline $\tilde V^{\pi_{\text{old}}}(s_t)$, 
the per-step integrand becomes
\begin{equation}
    \textstyle
    \cG^{\pi_{\text{old}}}_{\alpha=1}(\pi_\theta; s_t) \,\stackrel{c} = \E_{s'\sim\pi_\theta(\cdot\mid s_t)}\!\big[
    \tilde A^{\pi_{\text{old}}}(s_t, s') + 
    \log\pi_{\text{old}}(s'\mid s_t)\big] + \cH(\pi_\theta(\cdot\mid s_t))\,,
    \label{eq:soft_pi_via_A}
\end{equation}
where $\stackrel{c}{=}$ denotes equality up to a $\theta$-independent 
constant. The cross-entropy and entropy combine into a single KL,
\begin{equation}\label{eq:kl_combination} 
    \textstyle
    \E_{s'\sim\pi_\theta}\!\big[\log\pi_{\text{old}}(s'\mid s_t)\big] + \cH(\pi_\theta(\cdot\mid s_t)) = 
    -\mathrm{KL}\!\big(\pi_\theta(\cdot\mid s_t)\,\|\,
    \pi_{\text{old}}(\cdot\mid s_t)\big)
\end{equation}
so the integrand reduces to $\E_{\pi_\theta}[\tilde A^{\pi_{\text{old}}}
(s_t, s')] - \mathrm{KL}(\pi_\theta(\cdot\mid s_t)\,\|\,
\pi_{\text{old}}(\cdot\mid s_t))$. The KL is intrinsic to soft PI: 
it is the analytic, two-sided trust region against the behavior 
policy that soft policy improvement builds in for free, the same 
role that clipping plays in standard PPO.

\textbf{Importance-weighted, clipped surrogate.}
We importance-weight the advantage term as 
in~\eqref{eq:unclipped-surrogate}; the KL is computed analytically 
and requires no variance control. This yields the unclipped 
Ent-PPO surrogate
\begin{equation}\label{eq:ent_ppo_unclipped}
    \textstyle 
    \tilde{\cL}_{\text{soft}}(\theta; \pi_{\text{old}}) \triangleq \E_{\tau\sim\pi_{\text{old}}}\!\left[\sum_{t=0}^{T-1}\!
    \rho_t(\theta)\,\tilde A^{\pi_{\text{old}}}_t - 
    \KL\!\big(\pi_\theta(\cdot\mid s_t)\,\|\,
    \pi_{\text{old}}(\cdot\mid s_t)\big)\right]\!,
\end{equation}
with $\tilde A^{\pi_{\text{old}}}_t \triangleq 
\tilde A^{\pi_{\text{old}}}(s_t, s_{t+1})$. The same trust-region 
rationale that motivates clipping in~\eqref{eq:ppo_clip_repeat} applies to 
$\rho_t$ here, on top of the analytic KL. The final Ent-PPO 
objective is
\begin{equation}\label{eq:ent_ppo}
    \textstyle
    \cL_{\mathrm{Ent\text{-}PPO}}(\theta) \triangleq 
    \E_{\tau\sim\pi_{\text{old}}}\!\left[\sum_{t=0}^{T-1}\!
    \ell^{\mathrm{Ent}}_t(\theta)\right]\!,
\end{equation}
where the per-step loss combines the clipped advantage with the KL penalty:
% \begin{equation}\label{eq:ent_ppo_perstep}
%     \begin{split}
%         \ell^{\mathrm{Ent}}_t(\theta) &\triangleq 
%     \mathrm{PPOClip}\bigl(\rho_t(\theta),\, 
%     \tilde A^{\pi_{\text{old}}}_t\bigr) \\
%     &\quad - 
%     \KL\bigl(\pi_\theta(\cdot\mid s_t)\,\|\,
%     \pi_{\text{old}}(\cdot\mid s_t)\bigr).
%     \end{split}
% \end{equation}
$\ell^{\mathrm{Ent}}_t(\theta) \triangleq 
        \mathrm{PPOClip}(\rho_t(\theta),\, 
        \tilde A^{\pi_{\text{old}}}_t)
        - \KL(\pi_\theta(\cdot\mid s_t)\,\|\,
        \pi_{\text{old}}(\cdot\mid s_t)).
$
We also refer to \cref{app:entppo} for a more explicit TRPO-style derivation~\citep{kakade2002approximately,schulman2015trust}.

\textbf{Comparison to standard PPO.}
PPO maximizes
% \[
%     \E\big[\mathrm{PPOClip}(\rho_t(\theta),\, A_t) + 
%     c_2\,\cH(\pi_\theta(\cdot\mid s_t))\big],
% \] %
$\E\big[\mathrm{PPOClip}(\rho_t(\theta),\, A_t) + c_2\,\cH(\pi_\theta(\cdot\mid s_t))\big],$
with $A_t$ the standard (non-soft) advantage and $c_2\geq0$ a free exploration hyperparameter. Ent-PPO differs in three coupled ways: the advantage is soft; the entropy coefficient is fixed at $\alpha = 1$ by the soft-RL equivalence rather than left as a free $c_2$; and 
the entropy is paired with a cross-entropy against $\pi_{\text{old}}$, the two combining into a KL penalty. Decomposing %
% \[
%     \mathrm{KL}(\pi_\theta\,\|\,\pi_{\text{old}}) = 
%     -\cH(\pi_\theta) - \E_{\pi_\theta}\!\big[\log\pi_{\text{old}}\big]
% \]
$\mathrm{KL}(\pi_\theta\,\|\,\pi_{\text{old}}) = 
    -\cH(\pi_\theta) - \E_{\pi_\theta}\!\big[\log\pi_{\text{old}}\big]$
makes the structure transparent: a fixed-coefficient entropy bonus (the principled analogue of standard PPO's heuristic $c_2\cH$) plus a cross-entropy that pulls $\pi_\theta$ toward $\pi_{\text{old}}$, the analytic counterpart of what clipping enforces stochastically. We show empirically in~\cref{sec:experiments} that the absence of these corrections in standard PPO produces a biased policy, particularly with multiple update epochs per batch. 

\section{Experiments}\label{sec:experiments}

% Small string experiments use \texttt{gfnx}~\citep{tiapkin2025gfnx}, a JAX-based framework for GFlowNet fast training and benchmarking. 

\subsection{Environments and evaluation}\label{sec:exp_envs}

%We evaluate on three benchmark environments from the GFlowNet literature, all implemented in gfnx.
We use a standard range of environments and metrics in our experiments. We have a set of smaller environments: Hypergrid~\citep{bengio2021flow} and sequence-based environments TFBind8 and String QM9~\citep{shen2023towards}. For these problems, we use the Total Variation (TV) distance between the sampling model and the ground truth distribution. For ablation studies, we use Area Under the TV Curve (AUC) as a marginal metric measuring both the convergence speed and the final TV value. As for larger molecular graph generation problems, we consider the sEH~\citep{bengio2021flow} and QM9~\citep{jain2023multi} (graph-based variant of QM9 induces a larger and more complex space than the small sequence-based variant) problems and use the Evidence Lower Bound (ELBO) and dataset-based Evidence Upper Bound (EUBO) following~\citep{blessing2024beyond}, which measure intra- and inter-mode coverage. We refer to \cref{app:exp_envs} for a more detailed description of experiments and metrics.

\subsection{Variance Reduction in VPG}\label{sec:experiments_vpg}

We first study the five advantage estimators of~\cref{sec:variance_reduction} on VPG, identifying the best variance-reduction strategy before turning to PPO. \cref{fig:var_reduction} compares the simplest policy gradient, reward-to-go, reward-to-go with a learned value baseline, SubEB-GAE~\citep{niu2026evaluating}, and GAE~\citep{Schulman2015HighDimensionalCC} on all three environments.

The ordering is consistent with established results in the RL 
literature: variance decreases monotonically as we move from the 
simplest gradient to GAE, and final TV improves correspondingly. The simplest gradient and reward-to-go converge slowly and to substantially worse final solutions on all benchmarks. Adding a learned value baseline produces a large improvement, and GAE further improves convergence speed on Hypergrid and final TV on TFBind8, while matching the value-baseline variant on QM9. Since SubEB-GAE performs slightly worse than GAE, we use GAE as the variance-reduction strategy for all subsequent experiments.

% We compare the four advantage estimators (vanilla policy gradient, reward-to-go, reward-to-go with value baseline, and GAE) from \cref{sec:variance_reduction} on all three environments. 
% For GAE we use $\lambda_{\mathrm{GAE}} = 0.7$ and value learning rate equal to one third of the policy learning rate. The value network is trained for 4 epochs per batch with the batch split into 2 mini-batches (these hyperparameters are ablated in \cref{sec:exp_gae}). The policy is updated with a single gradient step per batch in all PG variants.  

% Results are shown in \cref{fig:var_reduction}. Moving from the simplest estimator to GAE reveals a clear and consistent progression across all three environments.

% Replacing the full-episode return with the reward-to-go reduces gradient variance, yielding a modest but consistent improvement. On Hypergrid, VPG makes virtually no progress, while RTG shows slow but visible 
% descent. However, both estimators converge to significantly worse solutions compared to other methods. Subtracting a learned value function $V_\phi(s_t)$ from the reward-to-go dramatically improves convergence across all environments. GAE consistently outperforms the reward-to-go with baseline: on Hypergrid, GAE converges faster in the early stages of training; on TFBind8, it achieves a lower final TV; on QM9, the two methods reach comparable final performance. 

% 
\label{sec:exp_gae}
\textbf{GAE ablation.} We study the sensitivity of GAE training to three hyperparameters mentioned in methodology: 
the GAE parameter $\lambda$, and two value-network training choices: the number of training epochs per rollout batch and the number of mini-batch splits within each epoch. We perform a full grid search over $\lambda \in \{0.5, 0.6, 0.7, 0.8, 0.9\}$, epochs $\in \{1, 2, 4, 8\}$, and splits $\in \{1, 2, 4, 8\}$ on Hypergrid, 
% (full results in Appendix~\ref{app:gae_grid}), 
identifying $(\lambda, E, S) = (0.7, 4, 2)$ as the best configuration. \cref{fig:GAE_boxpolts} then varies each hyperparameter individually around this optimum. AUC is stable across $\lambda \in [0.5, 0.8]$, with $\lambda = 0.7$ achieving the best median. Higher values ($\lambda = 0.9$) substantially degrade performance, confirming the importance of tuning the bias-variance trade-off. The number of value-network training epochs has the largest effect among the three hyperparameters. A single epoch yields substantially higher AUC, indicating that the value network underfits the rollout data. Performance improves monotonically up to 4 epochs, then degrades for 8. Training the value network with a single mini-batch (splits $= 1$) yields noticeably worse AUC, while splits $\in \{2, 4\}$ perform comparably. We prefer splits $= 2$ for its lower computational cost. Splits $= 8$ degrades performance, likely due to excessive noise in stochastic gradient updates.

\label{sec:exp_variance_reduction}\begin{figure*}[t]
    \centering
    \includegraphics[width=0.9\linewidth]{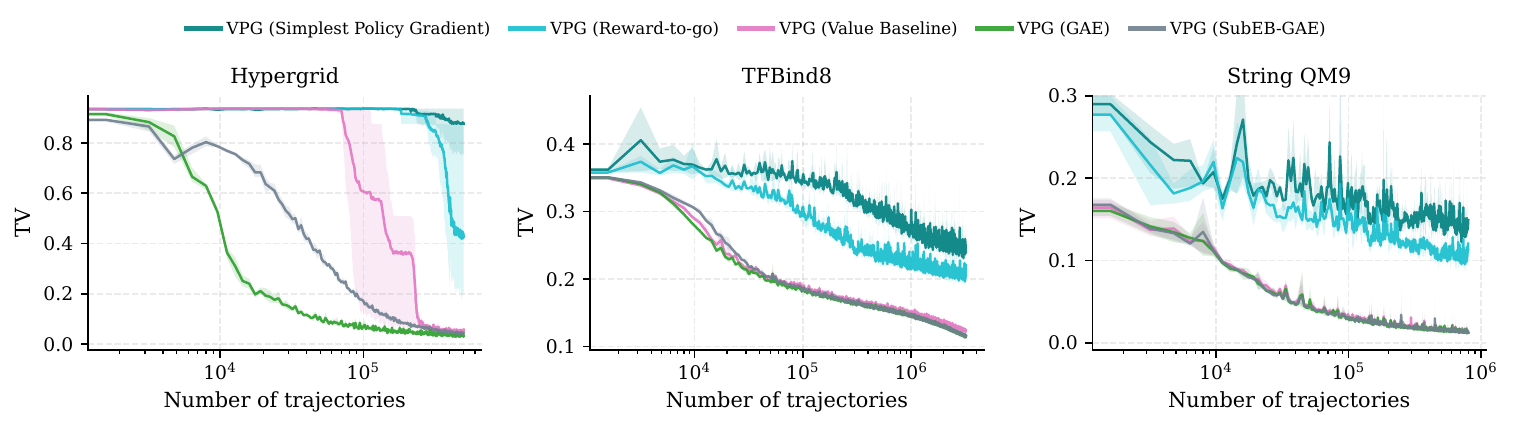}
    \caption{Comparison of five advantage estimators for VPG (simplest policy gradient, reward-to-go, reward-to-go with value baseline, SubEB-GAE, and GAE)  on Hypergrid, TFBind8, and String QM9. Curves show TV distance vs.\ the number of reward evaluations. Lower is better. Lines show the mean over 3 seeds. Shaded regions show the min--max range.
    }
    \label{fig:var_reduction}
\end{figure*}

% We first perform a full grid search over all combinations ($\lambda \in \{0.5, 0.6, 0.7, 0.8, 0.9\}$, splits $\in \{1, 2, 4, 8\}$, epochs $\in \{1, 2, 4, 8\}$) with 5 seeds per configuration; full results are provided in Appendix~X. Based on this search, we identify $\lambda = 0.7$, splits $= 2$, epochs $= 4$ as the best configuration and fix it for all subsequent experiments. We then vary each parameter individually around this optimum to assess sensitivity; results are shown in \cref{fig:GAE_boxpolts}.

%\textbf{GAE $\lambda$.} AUC is stable across $\lambda \in [0.5, 0.8]$, with $\lambda = 0.7$ achieving the best median. Higher values ($\lambda = 0.9$) substantially degrade performance, confirming the importance of tuning the bias-variance trade-off.

%\textbf{Number of epochs.} The number of value-network training epochs has the largest effect among the three hyperparameters. A single epoch yields substantially higher AUC, indicating that the value network underfits the rollout data. Performance improves monotonically up to 4 epochs, then degrades for 8.

%\textbf{Mini-batch splits.} Training the value network with a single mini-batch (splits $= 1$) yields noticeably worse AUC, while splits $\in \{2, 4\}$ perform comparably. We prefer splits $= 2$ for its lower computational cost. Splits $= 8$ degrades performance, likely due to excessive noise in stochastic gradient updates.

\subsection{Entropic Proximal Policy Optimization}

\textbf{Ent-PPO outperforms other objectives on small and large experiments.} \cref{fig:small_exps_EntPPO_vs_baselines} compares Ent-PPO against DB~\citep{bengio2023gflownet}, TB~\citep{malkin2022trajectory}, SubTB~\citep{madan2023learning}, and TRPO~\citep{pmlr-v235-niu24c} on the set of small problems. Even with $K=1$, where Ent-PPO reduces to VPG (GAE), the policy gradient approach already exceeds all baselines on all three environments. \cref{fig:seh_fixedpb_combined_k_panels} present results on the larger problem, sEH. Ent-PPO ($K=8$) achieves the best ELBO among all methods and a substantially lower dataset-based EUBO than the off-policy baselines, indicating better coverage of the target distribution on both metrics. In contrast to the smaller experiments, further increasing $K$ up to $8$ is beneficial for this problem, which aligns with the RL literature, where more complex problems require more optimization steps. \cref{fig:qm9_fixedpb_combined_k_panels} shows results on the largest considered problem, QM9. The performance gap between Ent-PPO and the baselines is the largest, showing the potential of our method to scale.
% Interestingly, TB with $K=4$ diverged on QM9, while TB with $K=1$ performs well.

\textbf{Multiple update epochs improve sample efficiency.} Increasing $K$ yields consistent improvement gains across all environments, as shown in \cref{fig:small_exps_EntPPO_vs_baselines}, \cref{fig:seh_fixedpb_combined_k_panels}, and \cref{fig:qm9_fixedpb_combined_k_panels}. We additionally consider the setup where baselines perform multiple gradient updates per sampled batch of trajectories, similarly to Ent-PPO. This is theoretically viable since the baseline objectives are off-policy~\citep{tiapkin2024generative}.
\cref{fig:seh_fixedpb_combined_k_panels}, \cref{fig:qm9_fixedpb_combined_k_panels}, and \cref{fig:small_exps_EntPPO_4epochs} show that Ent-PPO still retains its advantage in this setup.

\begin{figure*}[t]
    \centering
    \includegraphics[width=0.9\linewidth]{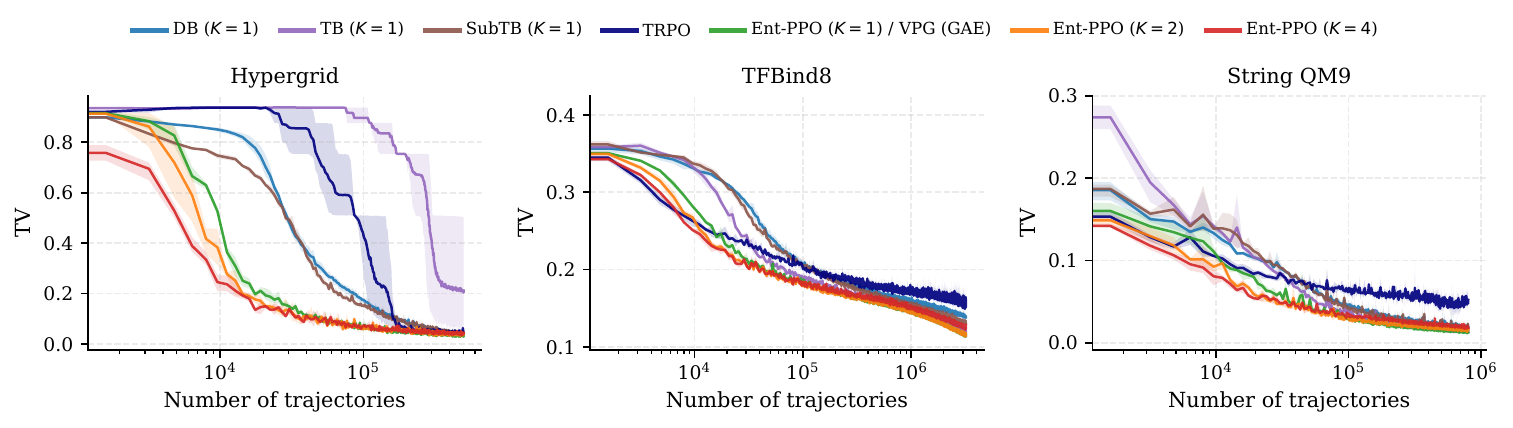}
    \caption{Ent-PPO with $K \in \{1, 2, 4\}$ update epochs per batch, compared against GFlowNet baselines (DB, TB, and SubTB) and TRPO on Hypergrid, TFBind8, and String QM9. Note that Ent-PPO ($K=1$) is equivalent to VPG (GAE). Curves show TV distance vs.\ the number of reward evaluations. Lower is better. Lines show the mean over 3 seeds. Shaded regions show the min--max range.}
    \label{fig:small_exps_EntPPO_vs_baselines}
\end{figure*}

\begin{figure}[t]
    \centering $\scriptscriptstyle{\mathrm{sEH}}$
    % \centering $\mathrm{sEH}$
    \centering
    \includegraphics[width=1\linewidth]{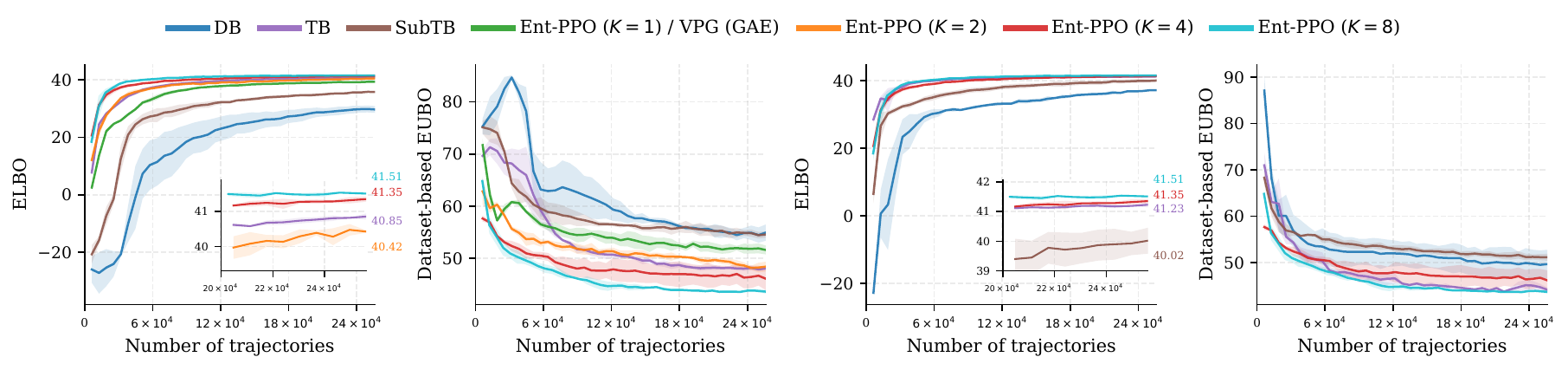}
    \caption{Ent-PPO compared against GFlowNet baselines (DB, TB, and SubTB) on sEH. Left: baselines use a single update per batch, while Ent-PPO varies $K \in \{1, 2, 4, 8\}$ update epochs. Right: baselines use $K=4$ updates per batch, while Ent-PPO varies $K \in \{4, 8\}$ update epochs. Odd panels show ELBO vs.\ the number of reward evaluations (higher is better). Even panels show Dataset-based EUBO vs.\ the number of reward evaluations (lower is better). Lines show the mean over 3 seeds. Shaded regions show the min--max range.}
    \label{fig:seh_fixedpb_combined_k_panels}
\end{figure}

\textbf{The KL term is essential.} The KL penalty distinguishes Ent-PPO from naive PPO and arises directly from the soft-RL derivation (\cref{eq:ent_ppo}). \cref{fig:EntPPO_vs_PPO} shows that this term is not a minor correction: removing it slows convergence and increases instability with few update epochs, and prevents convergence entirely with many. This empirically confirms the central claim of \cref{sec:methodology} that the entropy correction required by the soft-RL formulation is important.

\textbf{Clipping is essential too.} 
One might suspect that the KL penalty in~\cref{eq:ent_ppo} — which is strict and analytic, not heuristic — already provides a strong enough trust region to make ratio clipping redundant. \cref{fig:EntPPO_no_clip} shows otherwise: removing the clipping while keeping the KL term causes training to diverge. The KL arises from the soft policy improvement objective and pins down the right optimum, but it does not bound the per-sample importance ratio, whose unbounded fluctuations across update epochs destabilize the gradient estimator. Clipping plays this distinct role and is needed alongside the KL.

\textbf{We include the discussion on backward policy learning and results on QM9 in \cref{app:extended_discussion_experiments}}.
% \textbf{Extended discussion on backward policy learning}
% Ent-PPO enables backward policy learning via TLM and performs better than off-policy baselines.} Extended discussion can be found
% % 

\section{Conclusion}
\label{sec:conclusion}

We study policy gradient algorithms for sampling from discrete distributions under the GFlowNet framework. Our experiments show that, with appropriate variance reduction, even Vanilla Policy Gradient performs better than established off-policy GFlowNet objectives on small problems and outperforms two of three algorithms on larger problems. Building on this, we derive Ent-PPO, a PPO variant adapted to the soft-RL formulation of GFlowNet sampling, and show empirically that it further improves over VPG and outperforms the baselines on the entire set of problems. The KL penalty in our derivation, which arises naturally from the entropy regularization required by the soft-RL equivalence, is essential: removing it causes naive PPO to collapse, especially with multiple update epochs. An interesting direction for future work is adapting policy gradient methods within the soft-RL framework to sampling in continuous domains~\citep{lahlou2023theory, sendera2024improved, blessing2024beyond}.

\section*{Acknowledgmenents}
This research was supported in part through computational resources of HPC facilities at HSE University~\citep{kostenetskiy2021hpc}.

% extending to learnable backward policies~\citep{gritsaev2025optimizing}, scaling to large discrete sampling problems~\citep{jain2023gflownets, cretu2025synflownet}, and adapting policy gradient methods within the soft-RL framework to sampling in continuous domains~\citep{lahlou2023theory, sendera2024improved, blessing2024beyond}.

\bibliography{bibfile}

@article{barrera2016survey,
  title={Survey of variation in human transcription factors reveals prevalent DNA binding changes},
  author={Barrera, Luis A and Vedenko, Anastasia and Kurland, Jesse V and Rogers, Julia M and Gisselbrecht, Stephen S and Rossin, Elizabeth J and Woodard, Jaie and Mariani, Luca and Kock, Kian Hong and Inukai, Sachi and others},
  journal={Science},
  volume={351},
  number={6280},
  pages={1450--1454},
  year={2016},
  publisher={American Association for the Advancement of Science}
}

@article{ramakrishnan2014quantum,
  title={Quantum chemistry structures and properties of 134 kilo molecules},
  author={Ramakrishnan, Raghunathan and Dral, Pavlo O and Rupp, Matthias and Von Lilienfeld, O Anatole},
  journal={Scientific data},
  volume={1},
  number={1},
  pages={1--7},
  year={2014},
  publisher={Nature Publishing Group}
}

@inproceedings{jain2023multi,
  title={Multi-objective gflownets},
  author={Jain, Moksh and Raparthy, Sharath Chandra and Hern{\'a}ndez-Garc{\i}a, Alex and Rector-Brooks, Jarrid and Bengio, Yoshua and Miret, Santiago and Bengio, Emmanuel},
  booktitle={International conference on machine learning},
  pages={14631--14653},
  year={2023},
  organization={PMLR}
}

@article{tiapkin2025gfnx,
  title={gfnx: Fast and Scalable Library for Generative Flow Networks in JAX},
  author={Tiapkin, Daniil and Agarkov, Artem and Morozov, Nikita and Maksimov, Ian and Tsyganov, Askar and Gritsaev, Timofei and Samsonov, Sergey},
  journal={arXiv preprint arXiv:2511.16592},
  year={2025}
}

@article{black2023training,
  title={Training diffusion models with reinforcement learning},
  author={Black, Kevin and Janner, Michael and Du, Yilun and Kostrikov, Ilya and Levine, Sergey},
  journal={arXiv preprint arXiv:2305.13301},
  year={2023}
}

@article{fan2023dpok,
  title={Dpok: Reinforcement learning for fine-tuning text-to-image diffusion models},
  author={Fan, Ying and Watkins, Olivia and Du, Yuqing and Liu, Hao and Ryu, Moonkyung and Boutilier, Craig and Abbeel, Pieter and Ghavamzadeh, Mohammad and Lee, Kangwook and Lee, Kimin},
  journal={Advances in Neural Information Processing Systems},
  volume={36},
  pages={79858--79885},
  year={2023}
}

@article{ouyang2022training,
  title={Training language models to follow instructions with human feedback},
  author={Ouyang, Long and Wu, Jeffrey and Jiang, Xu and Almeida, Diogo and Wainwright, Carroll and Mishkin, Pamela and Zhang, Chong and Agarwal, Sandhini and Slama, Katarina and Ray, Alex and others},
  journal={Advances in neural information processing systems},
  volume={35},
  pages={27730--27744},
  year={2022}
}

@article{yu2025dapo,
  title={Dapo: An open-source llm reinforcement learning system at scale, 2025},
  author={Yu, Qiying and Zhang, Zheng and Zhu, Ruofei and Yuan, Yufeng and Zuo, Xiaochen and Yue, Yu and Dai, Weinan and Fan, Tiantian and Liu, Gaohong and Liu, Lingjun and others},
  journal={URL https://arxiv. org/abs/2503.14476},
  volume={1},
  pages={2},
  year={2025}
}

@article{guo2025deepseek,
  title={Deepseek-r1: Incentivizing reasoning capability in llms via reinforcement learning},
  author={Guo, Daya and Yang, Dejian and Zhang, Haowei and Song, Junxiao and Wang, Peiyi and Zhu, Qihao and Xu, Runxin and Zhang, Ruoyu and Ma, Shirong and Bi, Xiao and others},
  journal={arXiv preprint arXiv:2501.12948},
  year={2025}
}

@InProceedings{pmlr-v235-niu24c,
  title = 	 {{GF}low{N}et Training by Policy Gradients},
  author =       {Niu, Puhua and Wu, Shili and Fan, Mingzhou and Qian, Xiaoning},
  booktitle = 	 {Proceedings of the 41st International Conference on Machine Learning},
  pages = 	 {38344--38380},
  year = 	 {2024},
  editor = 	 {Salakhutdinov, Ruslan and Kolter, Zico and Heller, Katherine and Weller, Adrian and Oliver, Nuria and Scarlett, Jonathan and Berkenkamp, Felix},
  volume = 	 {235},
  series = 	 {Proceedings of Machine Learning Research},
  month = 	 {21--27 Jul},
  publisher =    {PMLR},
  url = 	 {https://proceedings.mlr.press/v235/niu24c.html}
}

@article{williams1992simple,
  title={Simple statistical gradient-following algorithms for connectionist reinforcement learning},
  author={Williams, Ronald J},
  journal={Machine learning},
  volume={8},
  pages={229--256},
  year={1992},
  publisher={Springer}
}

@inproceedings{sutton1999policy,
  title={Policy gradient methods for reinforcement learning with function approximation},
  author={Sutton, Richard S and McAllester, David and Singh, Satinder and Mansour, Yishay},
  booktitle={Advances in Neural Information Processing Systems},
  volume={12},
  year={1999}
}

@inproceedings{schulman2015trust,
  title={Trust region policy optimization},
  author={Schulman, John and Levine, Sergey and Abbeel, Pieter and Jordan, Michael and Moritz, Philipp},
  booktitle={International Conference on Machine Learning},
  pages={1889--1897},
  year={2015},
  organization={PMLR}
}

@article{Schulman2015HighDimensionalCC,
  title={High-Dimensional Continuous Control Using Generalized Advantage Estimation},
  author={John Schulman and Philipp Moritz and Sergey Levine and Michael I. Jordan and P. Abbeel},
  journal={CoRR},
  year={2015},
  volume={abs/1506.02438},
  url={https://api.semanticscholar.org/CorpusID:3075448}
}

@inproceedings{haarnoja2017reinforcement,
  title={Reinforcement learning with deep energy-based policies},
  author={Haarnoja, Tuomas and Tang, Haoran and Abbeel, Pieter and Levine, Sergey},
  booktitle={International conference on machine learning},
  pages={1352--1361},
  year={2017},
  organization={PMLR}
}

@article{schulman2017equivalence,
  title={Equivalence between policy gradients and soft q-learning},
  author={Schulman, John and Chen, Xi and Abbeel, Pieter},
  journal={arXiv preprint arXiv:1704.06440},
  year={2017}
}

@article{schulman2017proximal,
  title={Proximal policy optimization algorithms},
  author={Schulman, John and Wolski, Filip and Dhariwal, Prafulla and Radford, Alec and Klimov, Oleg},
  journal={arXiv preprint arXiv:1707.06347},
  year={2017}
}

@inproceedings{tiapkin2024generative,
  title={Generative flow networks as entropy-regularized rl},
  author={Tiapkin, Daniil and Morozov, Nikita and Naumov, Alexey and Vetrov, Dmitry P},
  booktitle={International Conference on Artificial Intelligence and Statistics},
  pages={4213--4221},
  year={2024},
  organization={PMLR}
}

@inproceedings{shen2023towards,
  title={Towards understanding and improving gflownet training},
  author={Shen, Max W and Bengio, Emmanuel and Hajiramezanali, Ehsan and Loukas, Andreas and Cho, Kyunghyun and Biancalani, Tommaso},
  booktitle={International conference on machine learning},
  pages={30956--30975},
  year={2023},
  organization={PMLR}
}

@article{wolff1989collective,
  title={Collective Monte Carlo updating for spin systems},
  author={Wolff, Ulli},
  journal={Physical Review Letters},
  volume={62},
  number={4},
  pages={361},
  year={1989},
  publisher={APS}
}

@article{yang1997bayesian,
  title={Bayesian phylogenetic inference using DNA sequences: a Markov Chain Monte Carlo method.},
  author={Yang, Ziheng and Rannala, Bruce},
  journal={Molecular biology and evolution},
  volume={14},
  number={7},
  pages={717--724},
  year={1997}
}

@inproceedings{griffiths2002probabilistic,
  title={A probabilistic approach to semantic representation},
  author={Griffiths, Thomas L and Steyvers, Mark},
  booktitle={Proceedings of the Annual Meeting of the Cognitive Science Society},
  volume={24},
  number={24},
  year={2002}
}

@inproceedings{miao2019cgmh,
  title={Cgmh: Constrained sentence generation by metropolis-hastings sampling},
  author={Miao, Ning and Zhou, Hao and Mou, Lili and Yan, Rui and Li, Lei},
  booktitle={Proceedings of the AAAI Conference on Artificial Intelligence},
  volume={33},
  number={01},
  pages={6834--6842},
  year={2019}
}

@article{madigan1995bayesian,
  title={Bayesian graphical models for discrete data},
  author={Madigan, David and York, Jeremy and Allard, Denis},
  journal={International statistical review/revue internationale de statistique},
  pages={215--232},
  year={1995},
  publisher={JSTOR}
}

@inproceedings{potts1952some,
  title={Some generalized order-disorder transformations},
  author={Potts, Renfrey Burnard},
  booktitle={Mathematical proceedings of the cambridge philosophical society},
  volume={48},
  number={1},
  pages={106--109},
  year={1952},
  organization={Cambridge University Press}
}

@article{green1995reversible,
  title={Reversible jump Markov chain Monte Carlo computation and Bayesian model determination},
  author={Green, Peter J},
  journal={Biometrika},
  volume={82},
  number={4},
  pages={711--732},
  year={1995},
  publisher={Oxford University Press}
}

@article{ding2003statistical,
  title={A statistical sampling algorithm for RNA secondary structure prediction},
  author={Ding, Ye and Lawrence, Charles E},
  journal={Nucleic acids research},
  volume={31},
  number={24},
  pages={7280--7301},
  year={2003},
  publisher={Oxford University Press}
}

@article{bengio2021flow,
  title={Flow network based generative models for non-iterative diverse candidate generation},
  author={Bengio, Emmanuel and Jain, Moksh and Korablyov, Maksym and Precup, Doina and Bengio, Yoshua},
  journal={Neural Information Processing Systems (NeurIPS)},
  year={2021}
}

@inproceedings{
niu2026evaluating,
title={Evaluating {GF}lowNet from partial episodes for stable and flexible policy-based training},
author={Puhua Niu and Shili Wu and Xiaoning Qian},
booktitle={The Fourteenth International Conference on Learning Representations},
year={2026}
}

@article{sendera2024improved,
  title={Improved off-policy training of diffusion samplers},
  author={Sendera, Marcin and Kim, Minsu and Mittal, Sarthak and Lemos, Pablo and Scimeca, Luca and Rector-Brooks, Jarrid and Adam, Alexandre and Bengio, Yoshua and Malkin, Nikolay},
  journal={Neural Information Processing Systems (NeurIPS)},
  year={2024}
}

@article{lahlou2023theory,
  title={A theory of continuous generative flow networks},
  author={Lahlou, Salem and Deleu, Tristan and Lemos, Pablo and Zhang, Dinghuai and Volokhova, Alexandra and Hern{\'a}ndez-Garc{\i}a, Alex and Ezzine, L{\'e}na N{\'e}hale and Bengio, Yoshua and Malkin, Nikolay},
  journal={International Conference on Machine Learning (ICML)},
  year={2023},
}

@article{malkin2022trajectory,
  title={Trajectory balance: Improved credit assignment in {GFlowNets}},
  author={Malkin, Nikolay and Jain, Moksh and Bengio, Emmanuel and Sun, Chen and Bengio, Yoshua},
  journal={Neural Information Processing Systems (NeurIPS)},
  year={2022}
}

@article{madan2023learning,
  title={Learning {GFlowNets} from partial episodes for improved convergence and stability},
  author={Madan, Kanika and Rector-Brooks, Jarrid and Korablyov, Maksym and Bengio, Emmanuel and Jain, Moksh and Nica, Andrei Cristian and Bosc, Tom and Bengio, Yoshua and Malkin, Nikolay},
  journal={International Conference on Machine Learning (ICML)},
  year={2023},
}

@article{
    malkin2023gflownets,
    title={{GFlowNets} and variational inference},
    author={Malkin, Nikolay and Lahlou, Salem and Deleu, Tristan and Ji, Xu and Hu, Edward J and Everett, Katie E and Zhang, Dinghuai and Bengio, Yoshua},
    journal={International Conference on Learning Representations (ICLR)},
    year={2023}
}

@article{deleu2024discrete,
  title={Discrete probabilistic inference as control in multi-path environments},
  author={Deleu, Tristan and Nouri, Padideh and Malkin, Nikolay and Precup, Doina and Bengio, Yoshua},
  journal={Proceedings of the Fortieth Conference on Uncertainty in Artificial Intelligence},
  pages={997--1021},
  year={2024}
}

@article{gritsaev2025optimizing,
  title={Optimizing Backward Policies in {GF}lowNets via Trajectory Likelihood Maximization},
  author={Gritsaev, Timofei and Morozov, Nikita and Samsonov, Sergey and Tiapkin, Daniil},
  journal={International Conference on Learning Representations (ICLR)},
  year={2025}
}

@article{bengio2023gflownet,
  title={{GFlowNet} foundations},
  author={Bengio, Yoshua and Lahlou, Salem and Deleu, Tristan and Hu, Edward J and Tiwari, Mo and Bengio, Emmanuel},
  journal={Journal of Machine Learning Research},
  volume={24},
  number={210},
  pages={1--55},
  year={2023}
}

@article{neu2017unified,
  title={A unified view of entropy-regularized {Markov} decision processes},
  author={Neu, Gergely and Jonsson, Anders and G{\'o}mez, Vicen{\c{c}}},
  journal={arXiv preprint arXiv:1705.07798},
  year={2017}
}

@article{geist2019theory,
  title={A theory of regularized {Markov} decision processes},
  author={Geist, Matthieu and Scherrer, Bruno and Pietquin, Olivier},
  journal={International Conference on Machine Learning (ICML)},
  year={2019},
  organization={PMLR}
}

@article{blessing2024beyond,
  title={Beyond {ELBOs}: A Large-Scale Evaluation of Variational Methods for Sampling},
  author={Blessing, Denis and Jia, Xiaogang and Esslinger, Johannes and Vargas, Francisco and Neumann, Gerhard},
  journal={International Conference on Machine Learning (ICML)},
  year={2024},
}

@article{nachum2017bridging,
  title={Bridging the gap between value and policy based reinforcement learning},
  author={Nachum, Ofir and Norouzi, Mohammad and Xu, Kelvin and Schuurmans, Dale},
  journal={Neural Information Processing Systems (NeurIPS)},
  year={2017}
}

@article{kostenetskiy2021hpc,
  title={{HPC} resources of the {Higher School of Economics}},
  author={Kostenetskiy, PS and Chulkevich, RA and Kozyrev, VI},
  journal={Journal of Physics: Conference Series},
  volume={1740},
  pages={012050},
  year={2021},
  organization={IOP Publishing}
}

@article{chan2022greedification,
  title={Greedification operators for policy optimization: Investigating forward and reverse kl divergences},
  author={Chan, Alan and Silva, Hugo and Lim, Sungsu and Kozuno, Tadashi and Mahmood, A Rupam and White, Martha},
  journal={Journal of Machine Learning Research},
  volume={23},
  number={253},
  pages={1--79},
  year={2022}
}

@inproceedings{kakade2002approximately,
  title={Approximately optimal approximate reinforcement learning},
  author={Kakade, Sham and Langford, John},
  booktitle={Proceedings of the nineteenth international conference on machine learning},
  pages={267--274},
  year={2002}
}

@article{williams1991function,
  title={Function optimization using connectionist reinforcement learning algorithms},
  author={Williams, Ronald J and Peng, Jing},
  journal={Connection Science},
  volume={3},
  number={3},
  pages={241--268},
  year={1991},
  publisher={Taylor \& Francis}
}

@article{ionides2008truncated,
  title={Truncated importance sampling},
  author={Ionides, Edward L},
  journal={Journal of Computational and Graphical Statistics},
  volume={17},
  number={2},
  pages={295--311},
  year={2008},
  publisher={Taylor \& Francis}
}

@inproceedings{munos2016safe,
  title={Safe and efficient off-policy reinforcement learning},
  author={Munos, R{\'e}mi and Stepleton, Tom and Harutyunyan, Anna 
          and Bellemare, Marc G.},
  booktitle={Advances in Neural Information Processing Systems},
  volume={29},
  year={2016}
}
\bibliographystyle{apalike}

%%%%%%%%%%%%%%%%%%%%%%%%%%%%%%%%%%%%%%%%%%%%%%%%%%%%%%%%%%%%

\appendix

%%%%%%%%%%%%%%%%%%%%%%%%%%%%%%%%%%%%%%%%%%%%%%%%%%%%%%%%%%%%

\newpage

\section{Proximal Policy Optimization}\label{app:ppo}

PPO~\citep{schulman2017proximal} is an alternative to Vanilla Policy Gradient (VPG) with markedly better stability and sample efficiency. Like TRPO~\citep{schulman2015trust}, it can be derived 
as a parametric, gradient-based approximation of an underlying policy improvement scheme. We construct it in three steps: (i) the exact policy iteration (PI) operator, (ii) an off-policy, importance-weighted surrogate of its objective, and (iii) a trust-region correction in the spirit of conservative policy iteration~\citep[CPI;][]{kakade2002approximately}.

\textit{Policy iteration as per-state maximization.}
The PI operator returns, at every state $s$,
\[
    \textstyle 
    \pi_{\mathrm{PI}}(\cdot \mid s) \in 
    \arg\max_{\pi(\cdot\mid s)\in\Delta(\cA_s)}\;
    \sum_{a \in \cA_s} \pi(a \mid s)\, Q^{\pi_{\text{old}}}(s,a),
\]
mirroring the classical greedy update. Subtracting the constant 
baseline $V^{\pi_{\text{old}}}(s)$ does not change the maximizer 
but reduces the variance of sample-based estimates; aggregating 
along trajectories drawn from $\pi_{\text{old}}$ yields the 
surrogate
\begin{equation}\label{eq:pi-surrogate}
    \textstyle 
    \cL(\theta; \pi_{\text{old}}) \triangleq 
    \E_{\tau \sim \pi_{\text{old}}}\!\left[ \sum_{t=0}^{T-1}
    \sum_{a' \in \cA_{s_t}} \pi_\theta(a'\mid s_t)\, 
    A^{\pi_{\text{old}}}(s_t,a') \right]\!.
\end{equation}
When $\pi_{\text{old}}$ visits every state, the maximizers of $\cL$ 
coincide with the PI update; more generally, $\cL$ serves as a 
first-order surrogate for $J(\pi_\theta) - J(\pi_{\text{old}})$ 
around $\pi_{\text{old}}$~\citep{kakade2002approximately,
schulman2015trust}, which is what justifies treating it as a 
stand-in for $J$ during off-policy optimization.

\textit{Unclipped surrogate via importance sampling.}
The inner sum over $\cA_{s_t}$ is impractical for large action 
spaces. A single-sample importance estimator yields the equivalent 
form
\begin{equation}\label{eq:unclipped-surrogate}
    \textstyle \cL(\theta; \pi_{\text{old}}) = 
    \E_{\tau \sim \pi_{\text{old}}}\!\left[ \sum_{t=0}^{T-1}
    \rho_t(\theta)\, A^{\pi_{\text{old}}}_t \right]\!,
\end{equation}
where $\rho_t(\theta) \triangleq \pi_\theta(a_t \mid s_t) / 
\pi_{\text{old}}(a_t \mid s_t)$ and $A^{\pi_{\text{old}}}_t 
\triangleq A^{\pi_{\text{old}}}(s_t,a_t)$. This form requires only 
rollouts from $\pi_{\text{old}}$ together with an estimate of 
$A^{\pi_{\text{old}}}$.

\textit{From CPI to a clipped trust region.}
Maximizing~\eqref{eq:unclipped-surrogate} without restriction is 
unsafe: the surrogate is only a good proxy for $J$ in a 
neighborhood of $\pi_{\text{old}}$, and unbounded ratios 
$\rho_t(\theta)$ produce high-variance gradient estimates. CPI 
controls this by mixing the PI target with $\pi_{\text{old}}$; 
TRPO enforces an explicit KL trust region. PPO produces a similar 
effect with a cheaper, asymmetric mechanism inspired by 
\emph{truncated importance sampling}~\citep{ionides2008truncated,
munos2016safe}: the contribution of any transition whose ratio 
strays outside $[1-\epsilon,\,1+\epsilon]$ in the rewarding 
direction is capped,
\begin{align}\notag
    \textstyle \cL_{\mathrm{PPO}}(\theta; \pi_{\text{old}}) &\triangleq 
    \E_{\tau \sim \pi_{\text{old}}}\!\left[\sum_{t=0}^{T-1}
    \mathrm{PPOClip}\bigl(\rho_t(\theta),\, A^{\pi_{\text{old}}}_t
    \bigr)\right]\!, \\
    \mathrm{PPOClip}(\rho, A) &\triangleq 
    \min\!\bigl( \rho A,\; \mathrm{clip}(\rho,1{-}\epsilon,
    1{+}\epsilon)\,A \bigr)\,.\label{eq:ppo_clip_repeat}
\end{align}
The pessimistic $\min$ removes the incentive to push $\rho_t$ 
further from $1$ once the surrogate becomes untrustworthy, while 
leaving the gradient unattenuated in the safe regime.

\newpage

\section{Extended discussion of considered problems, evaluation protocol, and experimental results}

\subsection{Environments}\label{app:exp_envs}

\textbf{Hypergrid}~\citep{bengio2021flow} is a synthetic $d$-dimensional 
discrete hypercube of side length $H$, with reward concentrated at the $2^d$ corners of the grid and small elsewhere. We use $d = 4$, $H = 20$, giving $|\mathcal{X}| = 160\,000$ terminal states.

\textbf{TFBind8}~\citep{shen2023towards} is a DNA sequence design task: the goal is to generate length-8 sequences over $\{A, C, G, T\}$. The reward is the measured binding activity of the generated sequence to the human transcription factor SIX6~\citep{barrera2016survey}, giving $|\mathcal{X}| = 4^8 = 65\,536$ terminal states out of $87\,381$ states in total.

\textbf{String QM9} is a small-molecule generation environment. We use the prepend/append formulation~\citep{shen2023towards} with 11 building blocks and trajectories of length up to 5, giving $|\mathcal{X}| = 11^5 = 161\,051$ terminal states out of $178\,156$ in total. The reward is predicted by a proxy network trained on the QM9 dataset~\cite{ramakrishnan2014quantum} to estimate the HOMO-LUMO gap, a molecular property relevant for chemical reactivity.

\textbf{sEH}~\citep{bengio2021flow} is a fragment-based molecular graph generation task using a junction-tree representation. Molecules are generated as trees of molecular fragments by sequentially adding fragments from the 72-fragment vocabulary of \citep{bengio2021flow} and specifying chemically valid attachment points between adjacent fragments. This induces a large combinatorial state space: the original formulation estimates up to $10^{16}$ states, with roughly $100$--$2000$ available actions per state depending on the number of available stems. The environment includes a stop action and masks invalid attachments. We use a maximum of 9 fragments. 
% Since the same final molecular graph can be reached through multiple fragment-addition and attachment-order sequences, the resulting state space is naturally a DAG rather than a tree. 
Rewards are given by the pretrained MPNN proxy from \citep{bengio2021flow}, which predicts sEH binding affinity for the generated molecule.

\textbf{QM9}~\citep{jain2023multi} is a molecular graph generation task based on the QM9 dataset~\citep{ramakrishnan2014quantum}. Molecules are constructed atom-by-atom and bond-by-bond with up to 9 heavy atoms from the alphabet $\{\mathrm{C}, \mathrm{N}, \mathrm{F}, \mathrm{O}\}$. At each step the model can add an atom, add a bond between existing atoms, set node or bond attributes such as charge, explicit hydrogens, chirality, or bond order, or terminate the trajectory. The induced MDP is combinatorially large: a trajectory can contain up to roughly 128 construction steps, including atom additions, bond additions, attribute assignments, and termination, and the number of candidate actions in a 9-heavy-atom state is on the order of a few hundred before chemical-validity masks are applied. The reward is the same as for String QM9.
% given by a pretrained MXMNet~\citep{zhang2020molecular} proxy trained on QM9 to estimate the HOMO--LUMO gap, a molecular property relevant for chemical reactivity.

\subsection{Evaluation} 

\textbf{Synthetic experiments.} For small experiments (Hypergrid, TFBind8, and Small QM9), the exact target distribution $\cR(x)/\rmZ$ is tractable. We evaluate sampling quality using the exact total variation (TV) distance between the ground truth and the distribution induced by the forward policy. For ablations we summarise each training run by the area under the TV curve (AUC). AUC captures both convergence speed and final TV in a single scalar. 

\textbf{Molecular graph generation.} For large experiments, the state space is too large for exact TV distance evaluation. Following~\cite{blessing2024beyond}, we instead report the Evidence Lower Bound (ELBO) and Evidence Upper Bound (EUBO) on $\log \rmZ$, defined as:
\begin{align*} 
    \mathrm{ELBO} &= \E_{\pi_\theta} \left[ \log \frac{\cR(s_T) \cdot
    \prod_{t=0}^{T-1} \PB( s_{t} \mid s_{t+1} )}{
    \prod_{t=0}^{T-1} \pi_\theta( s_{t+1} \mid s_t ) } \right]
    \leq \log \rmZ \\
    &\leq
     \E_{s_T \sim \cR(x)/\rmZ,\, \tau \sim \PB( \cdot \mid s_T)} \left[ \log \frac{\cR(s_T) \cdot \prod_{t=0}^{T-1} \PB( s_{t} \mid s_{t+1} )}{
    \prod_{t=0}^{T-1} \pi_\theta( s_{t+1} \mid s_t ) } \right]
    = \mathrm{EUBO}
\end{align*}
ELBO measures intra-mode coverage, as trajectories are sampled from $\pi_\theta$. EUBO measures mode coverage by sampling terminal states from the target distribution $\cR(x)/\rmZ$. Since the true target distribution is unavailable in practice, we use proxy EUBO, where terminal states are drawn from a fixed dataset and trajectory log-weights are reweighted by reward.

These metrics are preferable to the mode-count and top-$k$ reward statistics, which measure optimization quality rather than distributional fidelity. A policy that concentrates mass on a few high-reward modes can score well on top-$k$ metrics while badly mismatching the target $\cR(x)/\rmZ$. ELBO and EUBO instead directly bound $\log \rmZ$. Crucially, the two bounds are complementary: ELBO is insensitive to uncovered modes (since $\pi_\theta$ never samples them), while EUBO is insensitive to over-represented ones (since sampling is from the target). Reporting both provides a two-sided diagnostic of distributional accuracy.

All experiments are repeated over 3 random seeds; figures report the mean, with shaded regions showing the min–max range. Hyperparameters for all methods, including the off-policy baselines DB, TB, and SubTB and on-policy TRPO, are tuned, and full ranges and selected values are reported in~\cref{app:hyperparameters}.

\subsection{Extended discussion of experimental results}
\label{app:extended_discussion_experiments}

\paragraph{SubEB-GAE.} We aim to reproduce the algorithm presented in the original work~\citep{niu2026evaluating}. The paper does not discuss mini-batch splits or multiple epochs for value learning, hence we set $E$ to 1 and $S$ to 1 for these experiments. However, we find these techniques useful for GAE, which shows the potential of SubEB-GAE.

\paragraph{Trajectory Likelihood Maximization enables Ent-PPO backward policy learning.} In contrast to off-policy baselines, Ent-PPO does not admit a joint forward--backward objective; this limitation is intrinsic to sampling algorithms derived from RL objectives and was noted in~\citep{tiapkin2024generative}. Trajectory Likelihood Maximization~\citep[TLM]{gritsaev2025optimizing}, defined as
\begin{equation}\label{eq:tlm}
    \mathrm{TLM} = - \E_{\tau \sim \PF} \left[ \log \prod_{t=0}^{T-1} \PB(s_t \mid s_{t+1}, \phi) \right]
    \qquad
    \nabla_{\phi} \mathrm{TLM} = \nabla_{\phi} \KL (\PF \| \PB),
\end{equation}
was proposed as a simple yet effective loss that enables backward policy learning for the RL-based algorithms of~\citep{tiapkin2024generative}. With the stabilization tricks of~\citep{gritsaev2025optimizing}, it also outperforms naive backward policy learning within a single off-policy objective, and can be used as backward learning strategy with DB, TB, and SubTB. 

\cref{fig:EntPPO_TLM_vs_baselines} presents results for off-policy baselines and Ent-PPO with learned $\PB$, where the off-policy baselines train the backward policy jointly and Ent-PPO uses TLM. Ent-PPO converges faster than the off-policy baselines.
% Note that we suggest a slight modification of TLM presented in the next paragraph.
For Ent-PPO, we consider two backward-policy learning schemes, and both outperform Ent-PPO with fixed $\PB$ (\cref{fig:fixed_vs_learned_pb}, right). The first applies a single TLM update per iteration. The second performs $K$ TLM updates per iteration on a freshly sampled rollout, equalizing the number of gradient steps taken by the forward and backward policies. The second scheme is less theoretically grounded, since TLM is motivated as a stochastic estimate of $\nabla_{\phi} \KL (\PF \| \PB)$, but achieves better results in practice.

\paragraph{Backward policy learning can improve performance.} \cref{fig:fixed_vs_learned_pb} compares algorithms with fixed $\PB$ against the same algorithms with learned $\PB$. Learning the backward policy speeds up convergence, a well-known result from~\citep{malkin2022trajectory, gritsaev2025optimizing}.

We do not include similar experiments for TFBind8, since this is an autoregressive environment with a fixed backward policy. 

For String QM9, we do not observe a difference between off-policy methods with learned and fixed backward policies. The same results are found for Ent-PPO. We hypothesize this happens due to the limited expressivity of the backward policy in this task: it can only choose from which side, left or right, to erase a symbol.

\paragraph{QM9.} \cref{fig:qm9_fixedpb_combined_k_panels} compares Ent-PPO against off-policy baselines on QM9. Ent-PPO ($K=1$) and TB ($K=4$) are excluded because the former shows unstable training and the latter diverges. Interestingly, the gap between the best Ent-PPO and the best baseline is significantly larger here than on Hypergrid, the string problems, and sEH. Moreover, TB with multiple updates performs worse than with a single update. We attribute this to the larger scale of the problem. In contrast to TB, increasing $K$ for Ent-PPO consistently leads to improved performance.

\newpage
\section{Figures}\label{app:figures}

\begin{figure}[H]
    \centering
    \includegraphics[width=1\linewidth]{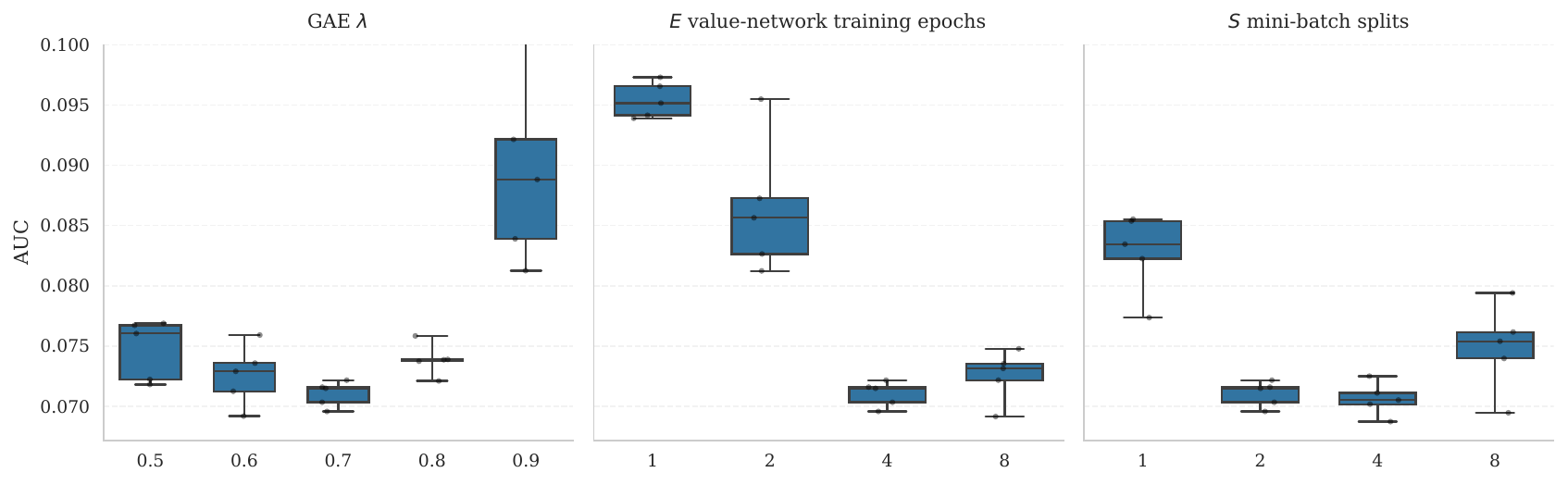}
    \caption{Sensitivity of GAE training to three hyperparameters, evaluated on Hypergrid: the GAE parameter $\lambda$, the number of value-network training epochs per batch, and the number of mini-batch splits within each epoch. For each panel, the labeled hyperparameter varies while the other two are fixed at their best values, $(\lambda, E, S) = (0.7, 4, 2)$. Lower AUC (area under the TV curve) is better. Box plots show the median, Q1, and Q3 across 5 seeds. Whiskers extend to the min and max.}
    \label{fig:GAE_boxpolts}
\end{figure}

%\begin{figure}[H]
   % \centering $\scriptscriptstyle{\mathrm{sEH}}$
    % \centering $\mathrm{sEH}$
    %\centering
    %\includegraphics[width=1\linewidth]{figures/seh_fixedpb_combined_k_panels.pdf}
    %\caption{Ent-PPO compared against GFlowNet baselines (DB, TB, and SubTB) on sEH. Left: baselines use a single update per batch, while Ent-PPO varies $K \in \{1, 2, 4, 8\}$ update epochs. Right: baselines use $K=4$ updates per batch, while Ent-PPO varies $K \in \{4, 8\}$ update epochs. Odd panels show ELBO vs.\ the number of reward evaluations (higher is better). Even panels show Dataset-based EUBO vs.\ the number of reward evaluations (lower is better). Lines show the mean over 3 seeds. Shaded regions show the min--max range.}
    %\label{fig:seh_fixedpb_combined_k_panels}
%\end{figure}

\begin{figure}[H]
    \centering $\scriptscriptstyle{\mathrm{QM9}}$
    % \centering $\mathrm{sEH}$
    \centering
    \includegraphics[width=1\linewidth]{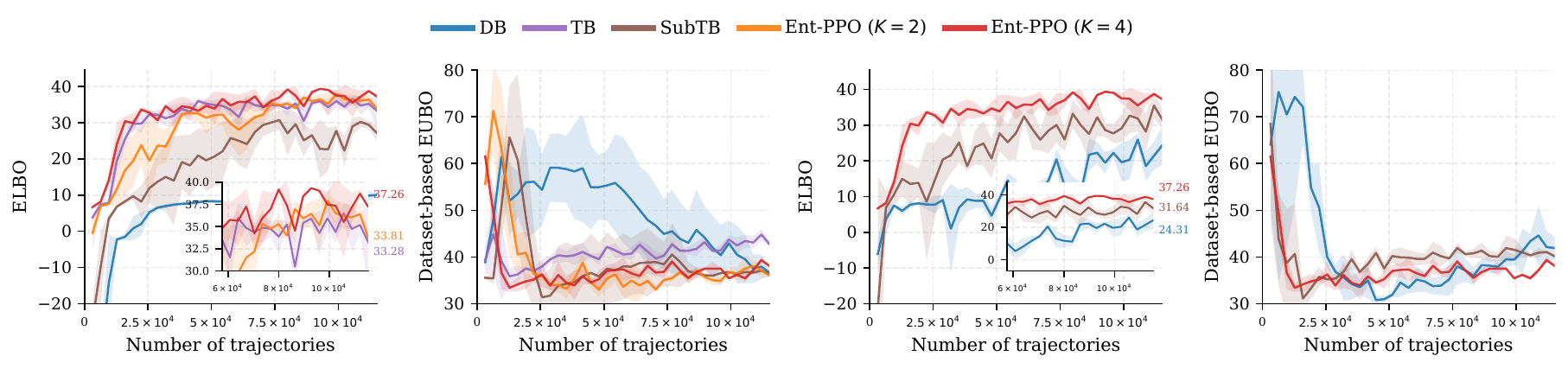}
    \caption{Ent-PPO compared against GFlowNet baselines (DB, TB, and SubTB) on QM9. Left: baselines use a single update per batch, while Ent-PPO varies $K \in \{1, 2, 4\}$ update epochs. Right: baselines use $K=4$ updates per batch, while Ent-PPO uses $K = 4$ update epochs. Ent-PPO ($K=1$) shows instable performance and TB ($K=4$) diverges, hence excluded. Odd panels show ELBO vs.\ the number of reward evaluations (higher is better). Even panels show Dataset-based EUBO vs.\ the number of reward evaluations (lower is better). Lines show the mean over 3 seeds. Shaded regions show the min--max range.}
    \label{fig:qm9_fixedpb_combined_k_panels}
\end{figure}

\begin{figure}[H]
    \centering
    \includegraphics[width=1\linewidth]{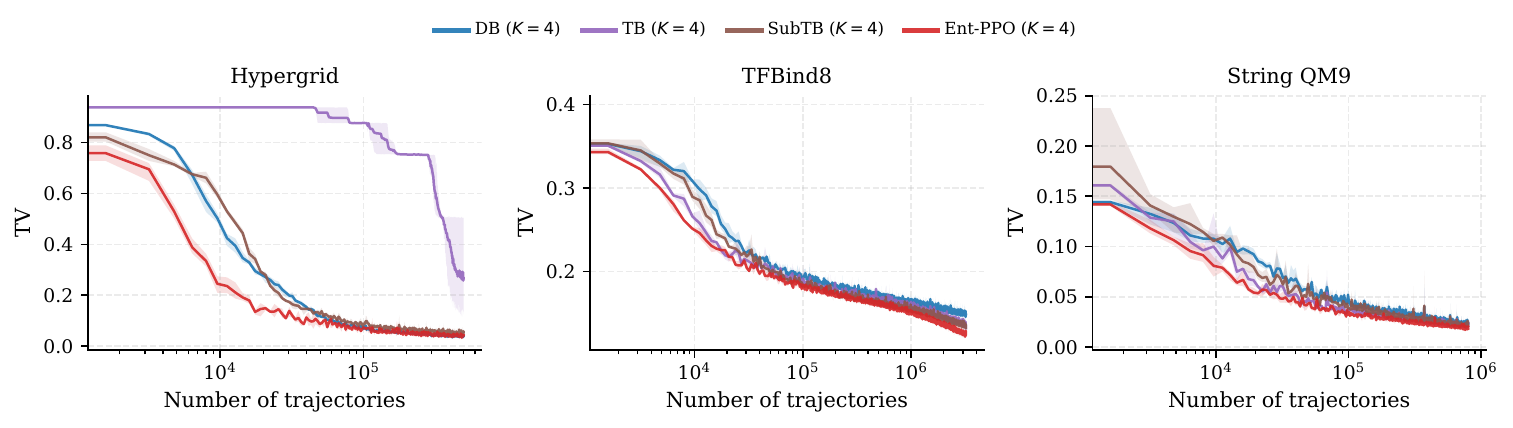}
    \caption{Ent-PPO compared against off-policy baselines (DB, TB, SubTB) when all methods perform four gradient updates per batch. Ent-PPO retains its advantage on all three environments. Lines show the mean over 3 seeds. Shaded regions show the min--max range.}
    \label{fig:small_exps_EntPPO_4epochs}
\end{figure}

\begin{figure}[H]
    \centering
    \includegraphics[width=1\linewidth]{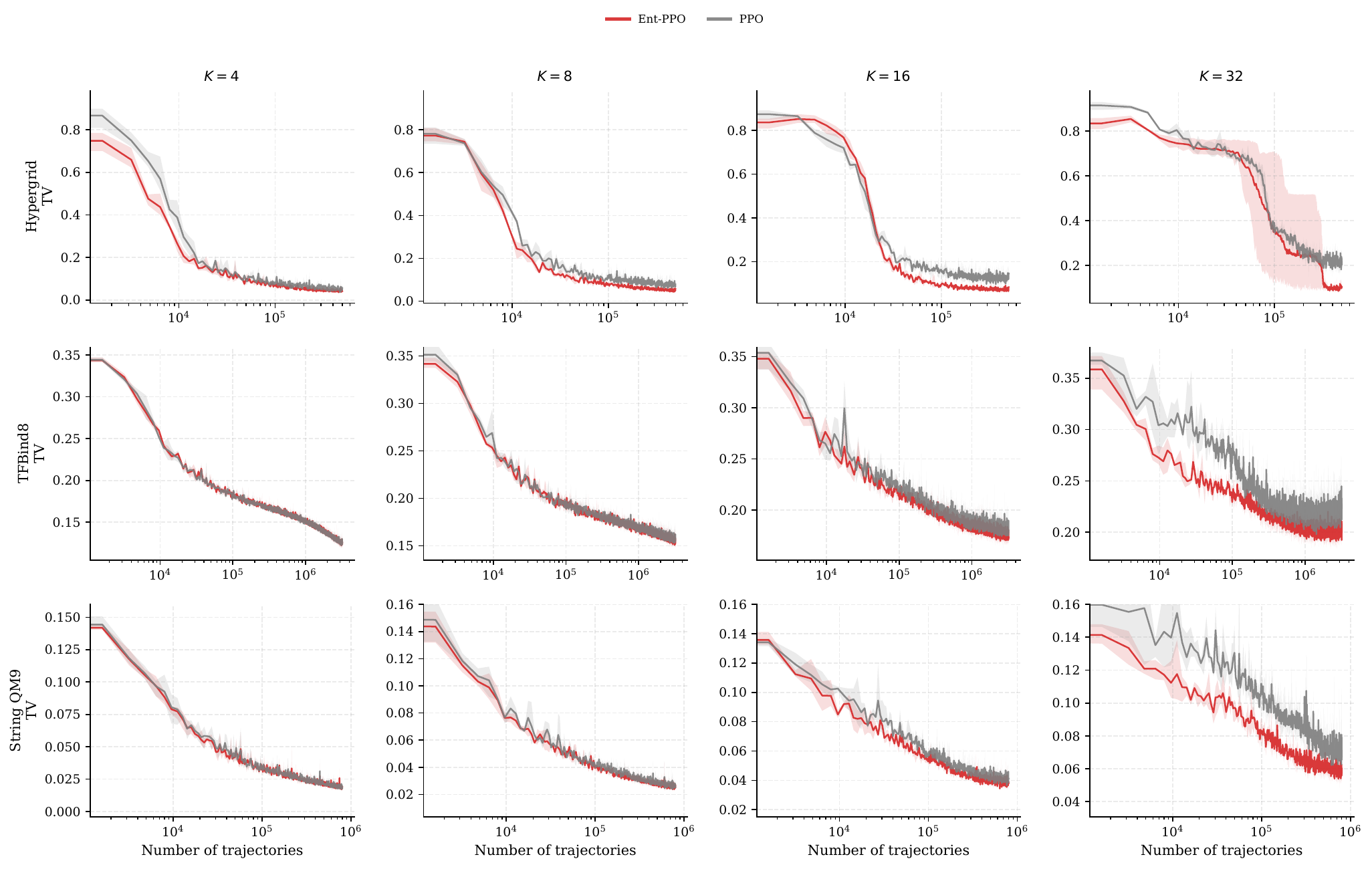}
    \caption{Effect of removing the KL penalty from Ent-PPO 
(\cref{eq:ent_ppo}) on Hypergrid, TFBind8, and String QM9, with $K \in \{4, 8, 16, 32\}$ update epochs per batch. Without the KL term, which reduces Ent-PPO to naive PPO, convergence slows and less stable for $K\in\{4, 8 \}$ and fails entirely for larger $K$. The KL penalty is essential to the algorithm. Lines show the mean over 3 seeds. Shaded regions show the min--max range}
    \label{fig:EntPPO_vs_PPO}
\end{figure} 

\begin{figure}[H]
    \centering
    \includegraphics[width=1\linewidth]{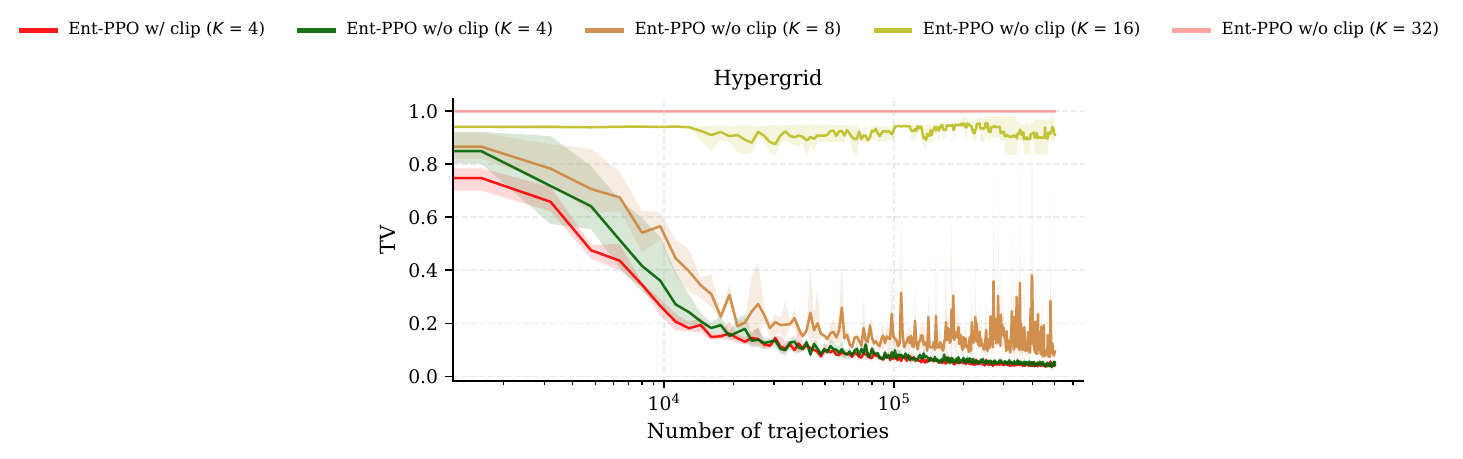}
    \caption{Effect of removing the clipping from Ent-PPO (\cref{eq:ent_ppo}) on Hypergrid with $K \in \{4, 8, 16, 32\}$ update epochs per batch. Without the clipping mechanism, convergence slows for $K\in\{4, 8 \}$ and fails entirely for larger $K$. The clipping is essential to the algorithm. Lines show the mean over 3 seeds. Shaded regions show the min--max range.}
    \label{fig:EntPPO_no_clip}
\end{figure}

\begin{figure}[H]
    \centering
    \includegraphics[width=1\linewidth]{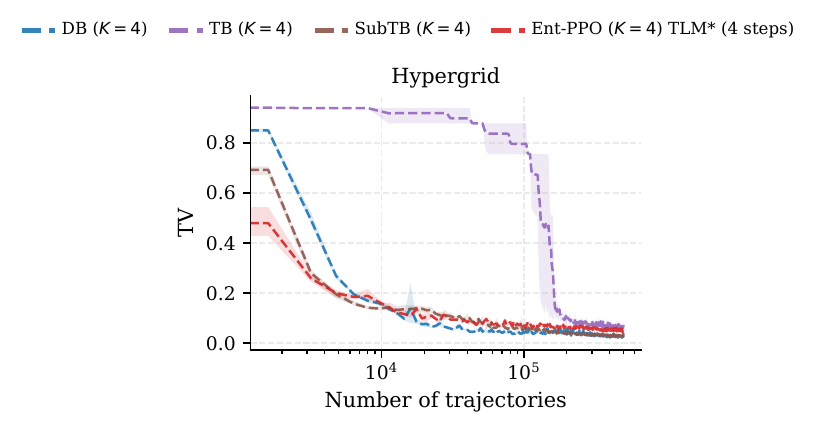}
    \caption{Ent-PPO with TLM-based backward policy learning compared against off-policy baselines (DB, TB, SubTB) that learn backward policy jointly within a single objective on Hypergrid. $K=4$ for all methods. Ent-PPO converges faster than the off-policy baselines. Curves show TV distance vs.\ the number of reward evaluations. Lower is better. Lines show the mean over 3 seeds. Shaded regions show the min--max range.}
    \label{fig:EntPPO_TLM_vs_baselines}
\end{figure}

\begin{figure}[H]
    \centering
    \includegraphics[width=1\linewidth]{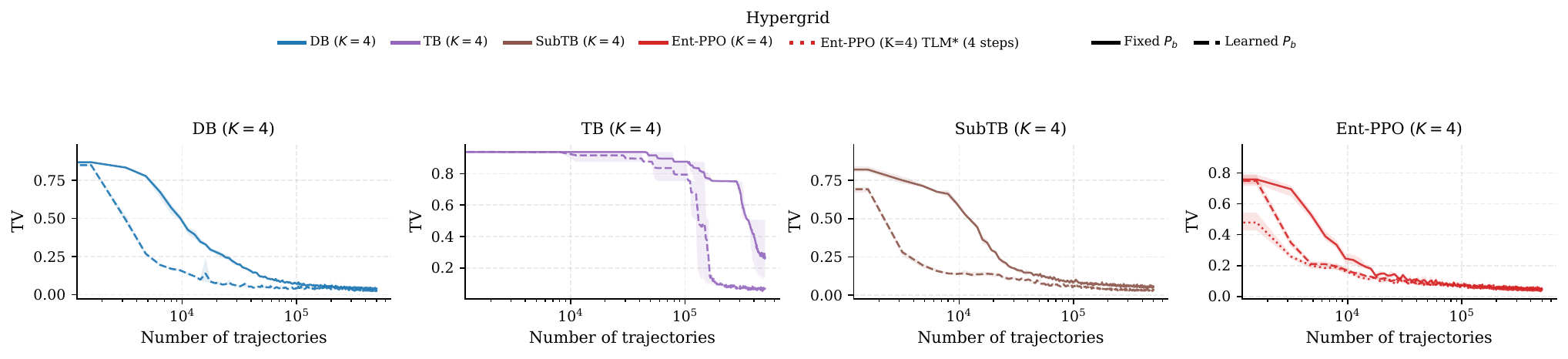}
    \caption{Algorithms with fixed uniform backward policy compared against the same algorithms with learned backward policy on Hypergrid. Off-policy baselines learn $\PB$ jointly with their main objective. Ent-PPO uses TLM, and the right panel shows both Ent-PPO backward-policy learning schemes discussed in \cref{app:extended_discussion_experiments}. Learned backward policy consistently speeds up convergence. Curves show TV distance vs.\ the number of reward evaluations. Lower is better. Lines show the mean over 3 seeds. Shaded regions show the min--max range.}
    \label{fig:fixed_vs_learned_pb}
\end{figure}

\newpage

\section{Pseudocode}\label{app:pseudocode}

\begin{algorithm}[H]
    \caption{Vanilla Policy Gradient}\label{alg:pg}
    \begin{algorithmic}[1]
        \REQUIRE initial policy parameters $\theta_0$, initial 
        value function parameters $\phi_0$, backward policy $\PB$, 
        reward $\cR$
        \FOR{$k = 0, 1, 2, \ldots$}
            \STATE Collect a set of trajectories 
            $\mathcal{D}_k = \{\tau_i\}$ by running 
            $\pi_{\theta_k}$.
            \STATE Assign per-step rewards $r(s_t, s_{t+1})$ along 
            each trajectory using $\PB$ and $\cR$ as 
            in~\cref{eq:sampling_reward}.
            \STATE Compute advantage estimates $\hat A_t$ using 
            one of the methods from~\cref{sec:variance_reduction}, 
            and corresponding value-fit targets $y_t$ (the 
            Monte-Carlo soft return $\hat R_t = \sum_{k=t}^{T-1} 
            g_k$ for the simplest, reward-to-go, and baseline 
            estimators; the bootstrapped target $\mathrm{sg}[\hat 
            A_t + \tilde V_\phi(s_t)]$ for GAE).
            \STATE Estimate the policy gradient:
            \[
                \hat{g}_k = \frac{1}{|\mathcal{D}_k|} 
                \sum_{\tau \in \mathcal{D}_k} \sum_{t=0}^{T-1} 
                \nabla_{\theta_k} \log \pi_{\theta_k}(s_{t+1} 
                \mid s_t) \cdot \hat A_t .
            \]
            \STATE Update $\theta_{k+1}$ by stochastic gradient 
            ascent on $\hat g_k$ using Adam.
            \STATE Fit the value function by mean-squared-error 
            regression:
            \[
                \phi_{k+1} = \arg\min_\phi 
                \frac{1}{|\mathcal{D}_k| \, T} 
                \sum_{\tau \in \mathcal{D}_k} \sum_{t=0}^{T-1} 
                \big( \tilde V_\phi(s_t) - y_t \big)^2 ,
            \]
            via several steps of stochastic gradient descent using 
            Adam.
        \ENDFOR
    \end{algorithmic}
\end{algorithm}

\begin{algorithm}[H]
    \caption{Entropic Proximal Policy Optimisation}\label{alg:ppo}
    \begin{algorithmic}[1]
        \REQUIRE initial policy parameters $\theta_0$, initial 
        value function parameters $\phi_0$, backward policy $\PB$, 
        reward $\cR$, clip $\epsilon$, GAE parameter $\lambda$, 
        number of update epochs $K$
        \FOR{$k = 0, 1, 2, \ldots$}
            \STATE Collect a set of trajectories 
            $\mathcal{D}_k = \{\tau_i\}$ by running 
            $\pi_{\theta_k}$.
            \STATE Assign per-step rewards $r(s_t, s_{t+1})$ along 
            each trajectory using $\PB$ and $\cR$ as 
            in~\cref{eq:sampling_reward}.
            \STATE Compute soft TD residuals $\delta_t = g_t + 
            \tilde V_{\phi_k}(s_{t+1}) - \tilde V_{\phi_k}(s_t)$, 
            GAE advantages $\hat A_t = \sum_{j=0}^{T-1-t} 
            \lambda^j \delta_{t+j}$, and value-fit targets 
            $y_t = \mathrm{sg}[\hat A_t + \tilde V_{\phi_k}(s_t)]$. 
            These quantities are held fixed across the $K$ inner 
            epochs. Set $\theta \gets \theta_k$, 
            $\phi \gets \phi_k$.
            \FOR{$e = 1, \ldots, K$}
                \STATE Estimate the Ent-PPO gradient 
                (\cref{eq:ent_ppo}):
                \[
                    \hat{g}_{k,e} = \nabla_\theta 
                    \frac{1}{|\mathcal{D}_k|} 
                    \sum_{\tau \in \mathcal{D}_k} \sum_{t=0}^{T-1} 
                    \Big[ \mathrm{PPOClip}\big( \rho_t(\theta), 
                    \, \hat A_t \big) - \KL\!\big( 
                    \pi_\theta(\cdot \mid s_t) \,\|\, 
                    \pi_{\theta_k}(\cdot \mid s_t) \big) \Big] ,
                \]
                where $\rho_t(\theta) = \pi_\theta(s_{t+1} \mid 
                s_t) / \pi_{\theta_k}(s_{t+1} \mid s_t)$ and 
                $\mathrm{PPOClip}$ is defined 
                in~\cref{eq:ppo_clip}; $\pi_{\theta_k}$ plays the 
                role of $\pi_{\text{old}}$ from the main text.
                \STATE Update $\theta$ by stochastic gradient 
                ascent on $\hat g_{k,e}$ using Adam.
                \STATE Fit the value function by 
                mean-squared-error regression:
                \[
                    \phi \gets \arg\min_\phi 
                    \frac{1}{|\mathcal{D}_k| \, T} 
                    \sum_{\tau \in \mathcal{D}_k} \sum_{t=0}^{T-1} 
                    \big( \tilde V_\phi(s_t) - y_t \big)^2 ,
                \]
                via several steps of stochastic gradient descent 
                using Adam.
            \ENDFOR
            \STATE Set $\theta_{k+1} \gets \theta$, 
            $\phi_{k+1} \gets \phi$.
        \ENDFOR
    \end{algorithmic}
\end{algorithm}

\newpage
\section{Standard Policy Gradient Derivations}\label{app:standard_derivations}

The derivations below specialize standard policy-gradient results to the soft-RL formulation of GFlowNet sampling 
(\cref{sec:gfn_background}) at $\alpha = \gamma = 1$. We include these derivations for the sake of completeness. These methods can be found in~\citep{williams1992simple, sutton1999policy, Schulman2015HighDimensionalCC}. We use the abbreviations $\tilde V^\pi$, $\tilde Q^\pi$, $\tilde A^\pi$ from \cref{sec:variance_reduction} and write $\tilde J(\pi_\theta) = 
\tilde V^{\pi_\theta}(\sinit)$. Both algorithms in 
\cref{app:pseudocode} take the GFN backward policy $\PB$ and 
reward $\cR$ as input; line 3 of each algorithm constructs the 
soft-MDP reward via~\eqref{eq:sampling_reward}, after which the 
forward policy is trained as in a standard MDP.

We denote by $\cT$ the set of trajectories of length $T$ starting 
at $\sinit$, where $T$ is the maximum trajectory length and shorter 
trajectories are padded by repeated visits to the absorbing state 
$s_f$. Padding contributes zero reward, so all sums  effectively truncate at the first absorbing transition. The rewards $r(s_t, s_{t+1})$ are independent of $\theta$ since $\PB$ is fixed.

\subsection{Simplest policy gradient}\label{app:vpg}

For brevity, write the per-step soft return
\[
    g_t = r(s_t, s_{t+1}) - \log\pi_\theta(s_{t+1}\mid s_t)
\]
as in~\cref{sec:variance_reduction}, and the trajectory return
\[
    G_\theta(\tau) = \sum_{t=0}^{T-1} g_t,
\]
so that $\tilde J(\pi_\theta) = \E_{\tau\sim\pi_\theta}
[G_\theta(\tau)]$ by~\cref{eq:value_def} at 
$\alpha = \gamma = 1$. Trajectories are sampled with probability 
$P_\theta(\tau) = \prod_{t=0}^{T-1} \pi_\theta(s_{t+1}\mid s_t)$, 
hence
\[
    \nabla_\theta \log P_\theta(\tau) = \sum_{t=0}^{T-1} 
    \nabla_\theta \log\pi_\theta(s_{t+1}\mid s_t).
\]

Both $P_\theta$ and $G_\theta$ depend on $\theta$, so we apply the 
product rule:
\begin{align*}
\nabla_\theta \tilde J(\pi_\theta)
&= \sum_{\tau\in\cT} \nabla_\theta P_\theta(\tau)\cdot 
G_\theta(\tau) + \sum_{\tau\in\cT} P_\theta(\tau)\cdot 
\nabla_\theta G_\theta(\tau) \\
&= \E_{\pi_\theta}\!\left[\sum_{t=0}^{T-1} 
\nabla_\theta \log\pi_\theta(s_{t+1}\mid s_t) \cdot G_\theta(\tau)
\right] - \E_{\pi_\theta}\!\left[\sum_{t=0}^{T-1} 
\nabla_\theta \log\pi_\theta(s_{t+1}\mid s_t)\right]\!,
\end{align*}
where the first equality uses $\nabla_\theta P_\theta = P_\theta\, 
\nabla_\theta\log P_\theta$, and the second uses 
$\nabla_\theta G_\theta = -\sum_{t=0}^{T-1} \nabla_\theta 
\log\pi_\theta(s_{t+1}\mid s_t)$ (the rewards $r(s_t, s_{t+1})$ are 
$\theta$-independent because $\PB$ is fixed).

The second expectation vanishes by the score-function identity. 
Conditioning on $s_t$,
\[
    \E_{s_{t+1}\sim\pi_\theta(\cdot\mid s_t)}\!\big[\nabla_\theta 
    \log\pi_\theta(s_{t+1}\mid s_t) \,\big|\, s_t\big] = 
    \sum_{s'} \nabla_\theta \pi_\theta(s'\mid s_t) = \nabla_\theta 
    \sum_{s'} \pi_\theta(s'\mid s_t) = \nabla_\theta 1 = 0,
\]
and the tower rule extends this to the unconditional expectation. 
We obtain the simplest policy gradient:
\[
    \nabla_\theta \tilde J(\pi_\theta) = \E_{\pi_\theta}\!\left[
    \sum_{t=0}^{T-1} \nabla_\theta \log\pi_\theta(s_{t+1}\mid s_t) 
    \cdot G_\theta(\tau)\right]\!.
\]

\subsection{Reward-to-go policy gradient}\label{app:rtg}

We show that the trajectory return $G_\theta(\tau) = 
\sum_{k=0}^{T-1} g_k$ in the simplest policy gradient can be 
replaced by the \emph{reward-to-go} $\hat R_t = \sum_{k=t}^{T-1} 
g_k$, since transitions occurring before step $t$ contribute zero 
in expectation to the gradient term at step $t$.

Split the trajectory return into past and reward-to-go:
\begin{align*}
\E_{\pi_\theta}\!\left[\sum_{t=0}^{T-1} \nabla_\theta 
\log\pi_\theta(s_{t+1}\mid s_t) \cdot G_\theta(\tau)\right]
&= \E_{\pi_\theta}\!\left[\sum_{t=0}^{T-1} \nabla_\theta 
\log\pi_\theta(s_{t+1}\mid s_t) \sum_{k=t}^{T-1} g_k\right] \\
&\quad + \E_{\pi_\theta}\!\left[\sum_{t=0}^{T-1} \nabla_\theta 
\log\pi_\theta(s_{t+1}\mid s_t) \sum_{k=0}^{t-1} g_k\right]\!.
\end{align*}

The second (past) term vanishes. The sum $\sum_{k=0}^{t-1} g_k$ 
depends only on $s_0, \ldots, s_t$, hence is $\sigma(s_0, \ldots, 
s_t)$-measurable. By the tower rule,
\begin{align*}
\E_{\pi_\theta}\!\left[\sum_{t=0}^{T-1} \nabla_\theta 
\log\pi_\theta(s_{t+1}\mid s_t) \sum_{k=0}^{t-1} g_k\right]
&= \sum_{t=0}^{T-1} \E_{\pi_\theta}\!\left[\E\!\left[\nabla_\theta 
\log\pi_\theta(s_{t+1}\mid s_t) \sum_{k=0}^{t-1} g_k \,\Big|\, 
s_0, \ldots, s_t\right]\right] \\
&= \sum_{t=0}^{T-1} \E_{\pi_\theta}\!\left[\left(\sum_{k=0}^{t-1} 
g_k\right) \cdot \E\!\left[\nabla_\theta \log\pi_\theta(s_{t+1}
\mid s_t) \,\Big|\, s_t\right]\right] \\
&= 0,
\end{align*}
where the last equality uses the score-function identity 
$\E[\nabla_\theta \log\pi_\theta(s_{t+1}\mid s_t)\mid s_t] = 0$ 
from~\cref{app:vpg}. The $t = 0$ term vanishes trivially since 
the inner sum is empty. We obtain the reward-to-go policy gradient:
\[
    \nabla_\theta \tilde J(\pi_\theta) = \E_{\pi_\theta}\!\left[
    \sum_{t=0}^{T-1} \nabla_\theta \log\pi_\theta(s_{t+1}\mid s_t) 
    \cdot \hat R_t\right]\!.
\]

\subsection{Baseline}\label{app:baseline}

The reward-to-go gradient can be further refined by subtracting a 
state-dependent \emph{baseline} $b(s_t)$ from $\hat R_t$. For any 
function $b: \cS \to \mathbb{R}$, the same score-function identity 
used in~\cref{app:vpg} gives
\[
    \E_{\pi_\theta}\!\left[\nabla_\theta 
    \log\pi_\theta(s_{t+1}\mid s_t) \cdot b(s_t)\right] = 
    \E_{\pi_\theta}\!\left[b(s_t) \cdot 
    \underbrace{\E\big[\nabla_\theta 
    \log\pi_\theta(s_{t+1}\mid s_t) \,\big|\, s_t\big]}_{=0}\right] 
    = 0,
\]
since $b(s_t)$ is $\sigma(s_0, \ldots, s_t)$-measurable. 
Subtracting $b(s_t)$ therefore preserves unbiasedness:
\[
    \nabla_\theta \tilde J(\pi_\theta) = \E_{\pi_\theta}\!\left[
    \sum_{t=0}^{T-1} \nabla_\theta \log\pi_\theta(s_{t+1}\mid s_t) 
    \cdot \big(\hat R_t - b(s_t)\big)\right]\!.
\]

The standard choice is $b(s_t) = \tilde V^{\pi_\theta}(s_t)$, 
which removes the trajectory-level component shared across actions 
at $s_t$ and substantially reduces the variance of the sample 
estimate. Since $\tilde V^{\pi_\theta}$ is unknown, it is replaced 
in practice with a learned approximation $\tilde V_\phi(s_t)$ 
(see~\cref{sec:baseline}).

\subsection{Advantage policy gradient}\label{app:advantage_pg}

We now express the reward-to-go gradient in terms of the soft 
$Q$-function and the soft advantage from 
\cref{eq:qfunction_advantage}.

Starting from the reward-to-go form derived in~\cref{app:rtg}: the 
score $\nabla_\theta \log\pi_\theta(s_{t+1}\mid s_t)$ is 
$\sigma(s_t, s_{t+1})$-measurable, so by the tower rule and the 
Markov property the conditioning collapses to $(s_t, s_{t+1})$:
\begin{align*}
\nabla_\theta \tilde J(\pi_\theta)
&= \E_{\pi_\theta}\!\left[\sum_{t=0}^{T-1} \nabla_\theta 
\log\pi_\theta(s_{t+1}\mid s_t) \cdot \hat R_t\right] \\
&= \sum_{t=0}^{T-1} \E_{\pi_\theta}\!\left[\nabla_\theta 
\log\pi_\theta(s_{t+1}\mid s_t) \cdot \E\!\left[\hat R_t \,\big|\, 
s_t, s_{t+1}\right]\right]\!.
\end{align*}

The inner expectation evaluates to
\begin{align*}
\E\!\left[\hat R_t \,\big|\, s_t, s_{t+1}\right]
&= g_t + \E\!\left[\sum_{k=t+1}^{T-1} g_k \,\Big|\, s_{t+1}\right] \\
&= \big(r(s_t, s_{t+1}) - \log\pi_\theta(s_{t+1}\mid s_t)\big) + 
\tilde V^{\pi_\theta}(s_{t+1}) \\
&= \tilde Q^{\pi_\theta}(s_t, s_{t+1}) - 
\log\pi_\theta(s_{t+1}\mid s_t),
\end{align*}
where the second equality uses the sample-path soft-value identity 
\eqref{eq:value_def} at $\alpha = \gamma = 1$ 
to identify $\E[\sum_{k=t+1}^{T-1} g_k \mid s_{t+1}] = 
\tilde V^{\pi_\theta}(s_{t+1})$, and the third uses the soft 
Bellman relation $\tilde Q^{\pi_\theta}(s, s') = r(s, s') + 
\tilde V^{\pi_\theta}(s')$ (under deterministic transitions and the 
action-as-successor convention, the entropy is paid at the 
successor state via $\tilde V^{\pi_\theta}(s')$, with no explicit 
term in $\tilde Q^{\pi_\theta}$).

Substituting back:
\[
    \nabla_\theta \tilde J(\pi_\theta) = \E_{\pi_\theta}\!\left[
    \sum_{t=0}^{T-1} \nabla_\theta \log\pi_\theta(s_{t+1}\mid s_t) 
    \cdot \big(\tilde Q^{\pi_\theta}(s_t, s_{t+1}) - 
    \log\pi_\theta(s_{t+1}\mid s_t)\big)\right]\!.
\]

Subtracting the baseline $b(s_t) = \tilde V^{\pi_\theta}(s_t)$ as 
in~\cref{app:baseline} and recognising the soft advantage
\[
    \tilde A^{\pi_\theta}(s, s') = \tilde Q^{\pi_\theta}(s, s') - 
    \tilde V^{\pi_\theta}(s) - \log\pi_\theta(s'\mid s)
\]
from~\cref{eq:qfunction_advantage} at $\alpha = 1$ 
yields:
\[
    \nabla_\theta \tilde J(\pi_\theta) = \E_{\pi_\theta}\!\left[
    \sum_{t=0}^{T-1} \nabla_\theta \log\pi_\theta(s_{t+1}\mid s_t) 
    \cdot \tilde A^{\pi_\theta}(s_t, s_{t+1})\right]\!,
\]
which is the soft-advantage form of \cref{eq:pg_theorem} 
specialized to $\alpha = \gamma = 1$.

\newpage
\section{Entropic PPO}\label{app:entppo}

In this section, we re-derive the Ent-PPO algorithm from the same perspective as PPO was derived in the seminal paper \citep{schulman2017proximal}. We start from the soft version of the performance-difference lemma \citep{kakade2002approximately,neu2017unified,geist2019theory} and then characterize how the soft surrogate objective $\tilde{\cL}_{\text{soft}}(\theta;\pi_{\text{old}})$, defined in \cref{sec:ent_ppo}, appears as a natural first-order approximation to it. Afterwards, we characterize the error between $\tilde{\cL}_{\text{soft}}(\theta;\pi_{\text{old}})$ and $\tilde{J}(\pi_\theta) - \tilde{J}(\pi_{\text{old}})$ as a function of the distance between $\pi_\theta$ and $\pi_{\text{old}}$ to justify the theoretical necessity of the trust-region component.

Throughout this section we use the notation of \cref{sec:rl_background,sec:methodology} specialized to the GFlowNet regime $\alpha = \gamma = 1$ with the action-as-successor convention, and we write $d^\pi_t \in \Delta(\cS)$ for the marginal distribution of $s_t$ under trajectories sampled from $\pi$ starting at $\sinit$.

\paragraph{Soft performance-difference lemma.}
The classical performance-difference lemma of \citet{kakade2002approximately} extends to the entropy-regularised regime; equivalent statements appear in \citet{neu2017unified} and \citet{geist2019theory}. The form below is specialised to our finite-horizon, deterministic-transition setting at $\alpha = \gamma = 1$.

% \todoDT{Define $\phi$-functions just here}

\begin{lemma}[Soft performance-difference lemma]\label{lem:soft_pdl}
For any two policies $\pi$ and $\pi_{\text{old}}$,
\begin{equation}\label{eq:soft_pdl}
    \tilde J(\pi) - \tilde J(\pi_{\text{old}})
    = \E_{\tau\sim\pi}\!\left[\sum_{t=0}^{T-1}\!\Big(
        \tilde A^{\pi_{\text{old}}}(s_t,s_{t+1})
        - \KL\!\big(\pi(\cdot\mid s_t)\,\|\,\pi_{\text{old}}(\cdot\mid s_t)\big)
    \Big)\right]\!.
\end{equation}
\end{lemma}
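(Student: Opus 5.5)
The plan is a telescoping argument along trajectories drawn from $\pi$, the standard route for the classical performance-difference lemma, adapted to the soft value. Write $\tilde J(\pi) = \E_{\tau\sim\pi}[\sum_{t=0}^{T-1}(r(s_t,s_{t+1}) - \log\pi(s_{t+1}\mid s_t))]$, using the sample-path form of \eqref{eq:value_def} at $\alpha=\gamma=1$. Since every trajectory starts at $\sinit$ and ends at the absorbing state with $\tilde V^{\pi_{\text{old}}}(s_T)=0$, we may also write
\[
\tilde J(\pi_{\text{old}}) = \tilde V^{\pi_{\text{old}}}(\sinit) = -\E_{\tau\sim\pi}\Big[\sum_{t=0}^{T-1}\big(\tilde V^{\pi_{\text{old}}}(s_{t+1}) - \tilde V^{\pi_{\text{old}}}(s_t)\big)\Big].
\]
Here the expectation is over $\pi$ but the sum telescopes deterministically. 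The finite DAG, together with the padding convention of \cref{app:standard_derivations}, makes the horizon bounded, so all sums are finite.

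Subtracting the two expressions gives
\[
\tilde J(\pi)-\tilde J(\pi_{\text{old}}) = \E_{\tau\sim\pi}\Big[\sum_t \big(r(s_t,s_{t+1}) + \tilde V^{\pi_{\text{old}}}(s_{t+1}) - \tilde V^{\pi_{\text{old}}}(s_t) - \log\pi(s_{t+1}\mid s_t)\big)\Big].
\]
By the deterministic soft Bellman relation $\tilde Q^{\pi_{\text{old}}}(s,s') = r(s,s') + \tilde V^{\pi_{\text{old}}}(s')$, as used in \cref{app:advantage_pg}, the summand equals $\tilde Q^{\pi_{\text{old}}}(s_t,s_{t+1}) - \tilde V^{\pi_{\text{old}}}(s_t) - \log\pi(s_{t+1}\mid s_t)$. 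Next we add and subtract $\log\pi_{\text{old}}(s_{t+1}\mid s_t)$. By the advantage definition \eqref{eq:qfunction_advantage} at $\alpha=1$, this turns the summand into
\[
\tilde A^{\pi_{\text{old}}}(s_t,s_{t+1}) - \big(\log\pi(s_{t+1}\mid s_t) - \log\pi_{\text{old}}(s_{t+1}\mid s_t)\big).
\]

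Finally, we condition each term on $s_t$ via the tower rule. Under $\tau\sim\pi$ we have $s_{t+1}\sim\pi(\cdot\mid s_t)$, so the expected log-ratio equals exactly $\KL(\pi(\cdot\mid s_t)\,\|\,\pi_{\text{old}}(\cdot\mid s_t))$. The advantage term we keep inside the expectation unchanged, which yields \eqref{eq:soft_pdl}.

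The main obstacle is bookkeeping at the boundary. We must check that the terminal transition $x\to s_f$, which carries reward $\log\cR(x)$ and is a forced action with $\log\pi=\log\pi_{\text{old}}=0$, fits the same identities. We must also check that padded absorbing steps contribute zero to the reward, the value differences, the advantage and the KL. If $\pi$ places mass where $\pi_{\text{old}}$ does not, the KL is $+\infty$ and both sides are $-\infty$; otherwise we assume common support so that all log-ratios are finite.
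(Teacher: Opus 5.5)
Your proposal is correct and follows essentially the same route as the paper's proof: the sample-path form of $\tilde J(\pi)$, pathwise telescoping of $\tilde V^{\pi_{\text{old}}}$ along $\tau\sim\pi$ using $\tilde V^{\pi_{\text{old}}}(s_T)=0$, identification of $\tilde Q^{\pi_{\text{old}}}$, the rearranged advantage definition to bring in $\log\pi_{\text{old}}$, and the tower rule to turn the expected log-ratio into the KL.

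One correction to your degenerate-support aside: if $\pi(\cdot\mid s)$ charges an action where $\pi_{\text{old}}(\cdot\mid s)=0$, the two sides are not both $-\infty$. The left-hand side stays finite, while the right-hand side becomes $+\infty-\infty$, because the advantage contains $-\log\pi_{\text{old}}$ and is itself $+\infty$ there. So the identity needs $\pi(\cdot\mid s)\ll\pi_{\text{old}}(\cdot\mid s)$ on states reached by $\pi$, which the paper leaves implicit.
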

The result is well-known in the soft RL literature \citep{neu2017unified,geist2019theory}, and we provide a proof for the sake of completeness.
\begin{proof}
From \eqref{eq:value_def} at $\alpha = \gamma = 1$, together with the identity $\E_{a\sim\pi}[-\log\pi(a\mid s)] = \cH(\pi(\cdot\mid s))$, the soft value admits the sample-path form
\begin{equation*}
    \tilde V^\pi(\sinit) = \E_{\tau\sim\pi}\!\left[\sum_{t=0}^{T-1}\!\big(r(s_t,s_{t+1}) - \log\pi(s_{t+1}\mid s_t)\big)\right]\!.
\end{equation*}
Pathwise, the telescoping identity
\(
    \sum_{t=0}^{T-1}\!\big(\tilde V^{\pi_{\text{old}}}(s_{t+1}) - \tilde V^{\pi_{\text{old}}}(s_t)\big)
    = \tilde V^{\pi_{\text{old}}}(s_T) - \tilde V^{\pi_{\text{old}}}(s_0)
    = -\tilde V^{\pi_{\text{old}}}(\sinit)
\)
holds because $s_0 = \sinit$ and $\tilde V^{\pi_{\text{old}}}(s_T) = 0$ at the absorbing state. Substituting this into the previous display,
\begin{align*}
    \tilde V^\pi(\sinit) - \tilde V^{\pi_{\text{old}}}(\sinit)
    = \E_{\tau\sim\pi}\!\left[\sum_{t=0}^{T-1}\!\Big(
        r(s_t,s_{t+1}) - \log\pi(s_{t+1}\mid s_t)
        + \tilde V^{\pi_{\text{old}}}(s_{t+1}) - \tilde V^{\pi_{\text{old}}}(s_t)
    \Big)\right]\!.
\end{align*}
Identifying $\tilde Q^{\pi_{\text{old}}}(s_t,s_{t+1}) = r(s_t,s_{t+1}) + \tilde V^{\pi_{\text{old}}}(s_{t+1})$ from \eqref{eq:qfunction_advantage} (deterministic transitions, $\gamma = 1$),
\begin{equation*}
    \tilde J(\pi) - \tilde J(\pi_{\text{old}})
    = \E_{\tau\sim\pi}\!\left[\sum_{t=0}^{T-1}\!\Big(
        \tilde Q^{\pi_{\text{old}}}(s_t,s_{t+1}) - \tilde V^{\pi_{\text{old}}}(s_t) - \log\pi(s_{t+1}\mid s_t)
    \Big)\right]\!.
\end{equation*}
The advantage definition in \eqref{eq:qfunction_advantage} at $\alpha = 1$ rearranges to $\tilde Q^{\pi_{\text{old}}}(s,s') - \tilde V^{\pi_{\text{old}}}(s) = \tilde A^{\pi_{\text{old}}}(s,s') + \log\pi_{\text{old}}(s'\mid s)$, hence
\begin{equation}\label{eq:soft_pdl_intermediate}
    \tilde J(\pi) - \tilde J(\pi_{\text{old}})
    = \E_{\tau\sim\pi}\!\left[\sum_{t=0}^{T-1}\!\Big(
        \tilde A^{\pi_{\text{old}}}(s_t,s_{t+1}) + \log\pi_{\text{old}}(s_{t+1}\mid s_t) - \log\pi(s_{t+1}\mid s_t)
    \Big)\right]\!.
\end{equation}
Conditional on $s_t$, the inner expectation of the last two log-probabilities under $s_{t+1}\sim\pi(\cdot\mid s_t)$ equals $-\KL(\pi(\cdot\mid s_t)\,\|\,\pi_{\text{old}}(\cdot\mid s_t))$, which is $\sigma(s_t)$-measurable; the tower rule then yields \eqref{eq:soft_pdl}.
\end{proof}

\paragraph{Approximation by soft surrogate objective.}
The right-hand side of \eqref{eq:soft_pdl} is impractical to optimise directly: at the start of an Ent-PPO iteration we have rollouts from $\pi_{\text{old}}$, not from $\pi_\theta$. Following the strategy of CPI \citep{kakade2002approximately} and TRPO \citep{schulman2015trust}, we replace the on-policy distribution under $\pi$ in \eqref{eq:soft_pdl} by the analogous distribution under $\pi_{\text{old}}$. 
The next lemma shows that the soft surrogate $\cL_{\text{soft}}$ defined in \eqref{eq:soft_pi_aggregate} is exactly this substitution.

% \todoDT{The following lemma should be rewritten, I switched the notation to advantage-dependent}
\begin{lemma}[Surrogate as state-distribution swap]\label{lem:surrogate_eq}
Define the per-state functional
\begin{equation}\label{eq:phi_def}
    \phi(\pi_\theta;s) \triangleq \E_{s'\sim\pi_\theta(\cdot\mid s)}\!\big[\tilde A^{\pi_{\text{old}}}(s,s')\big] - \KL\!\big(\pi_\theta(\cdot\mid s)\,\|\,\pi_{\text{old}}(\cdot\mid s)\big).
\end{equation}
Then, the following bound holds
\begin{align}\label{eq:decompose_tilde}
    \tilde{\cL}_{\text{soft}}(\theta;\pi_{\text{old}}) 
    &= \sum_{t=0}^{T-1}\E_{s_t\sim d^{\pi_{\text{old}}}_t}\!\big[\phi(\pi_\theta;s_t)\big]\,,\qquad 
    \tilde J(\pi_\theta) - \tilde J(\pi_{\text{old}})
    = \sum_{t=0}^{T-1}\E_{s_t\sim d^{\pi_\theta}_t}\!\big[\phi(\pi_\theta;s_t)\big].
\end{align}
\end{lemma}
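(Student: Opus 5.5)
The plan is to prove the two identities in \eqref{eq:decompose_tilde} separately. Both rest on the same device: rewrite an expectation over trajectories as a sum over time steps of expectations over the marginal $d^\pi_t$, and then collapse the inner conditional expectation over $s_{t+1}$ into the per-state functional $\phi$ from \eqref{eq:phi_def}. Throughout I use the padding convention of \cref{app:standard_derivations}: trajectories have fixed length $T$ (the maximal length), and transitions out of the absorbing state $s_f$ contribute zero. For this to hold we need $\tilde A^{\pi_{\text{old}}}(s_f,s_f)=0$ and $\KL(\pi(\cdot\mid s_f)\,\|\,\pi_{\text{old}}(\cdot\mid s_f))=0$. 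Both are immediate, because $s_f$ has a single forced successor, so $\phi(\cdot\,;s_f)=0$. With a deterministic horizon, linearity of expectation lets us exchange the sum over $t$ and the expectation.

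\emph{Second identity.} I start from \cref{lem:soft_pdl} with $\pi=\pi_\theta$:
\[
\tilde J(\pi_\theta)-\tilde J(\pi_{\text{old}})=\sum_{t=0}^{T-1}\E_{\tau\sim\pi_\theta}\big[\tilde A^{\pi_{\text{old}}}(s_t,s_{t+1})-\KL(\pi_\theta(\cdot\mid s_t)\,\|\,\pi_{\text{old}}(\cdot\mid s_t))\big].
\]
Fix $t$ and condition on $s_t$. By the Markov property, $s_{t+1}\sim\pi_\theta(\cdot\mid s_t)$. The KL term is $\sigma(s_t)$-measurable, so the conditional expectation of the summand is exactly $\phi(\pi_\theta;s_t)$. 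The tower rule then replaces the outer expectation by $\E_{s_t\sim d^{\pi_\theta}_t}$, since this expectation only depends on the law of $s_t$ under $\pi_\theta$.

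\emph{First identity.} I start from the definition \eqref{eq:ent_ppo_unclipped}, in which trajectories are drawn from $\pi_{\text{old}}$. Again I fix $t$ and condition on $s_t$, now with $s_{t+1}\sim\pi_{\text{old}}(\cdot\mid s_t)$. The key step is the importance-sampling identity
\[
\E_{s_{t+1}\sim\pi_{\text{old}}(\cdot\mid s_t)}\big[\rho_t(\theta)\tilde A^{\pi_{\text{old}}}(s_t,s_{t+1})\big]=\sum_{s'}\pi_\theta(s'\mid s_t)\,\tilde A^{\pi_{\text{old}}}(s_t,s')=\E_{s'\sim\pi_\theta(\cdot\mid s_t)}\big[\tilde A^{\pi_{\text{old}}}(s_t,s')\big].
\]
This requires the support condition $\pi_\theta(\cdot\mid s)\ll\pi_{\text{old}}(\cdot\mid s)$. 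That condition is needed anyway for $\rho_t$ and the KL to be finite, and I would state it explicitly; it holds automatically for softmax-parameterized policies with full support on $\cA_s$. The KL term is again $\sigma(s_t)$-measurable, so the conditional expectation of the summand equals $\phi(\pi_\theta;s_t)$. Taking the outer expectation over $s_t\sim d^{\pi_{\text{old}}}_t$ gives the claim.

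The main obstacle is bookkeeping rather than mathematics. The first point is justifying the exchange of $\E_\tau$ with the random-length sum $\sum_{t=0}^{T-1}$. This is handled by the padding to a deterministic horizon together with $\phi(\cdot\,;s_f)=0$; alternatively one can write $\sum_t \mathbf 1\{t<T\}$, noting that $\{t<T\}$ is determined by $s_t$ being non-absorbing. The second point is the absolute-continuity caveat in the importance-weighting step. Everything else is a direct application of the tower rule.
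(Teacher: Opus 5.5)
Your proposal is correct and follows the paper's argument: apply the tower rule to \eqref{eq:ent_ppo_unclipped} and to the soft performance-difference lemma \eqref{eq:soft_pdl}, and use that the KL term is $\sigma(s_t)$-measurable, so the conditional expectation of each summand is exactly $\phi(\pi_\theta;s_t)$. You also make explicit two points that the paper's one-line proof leaves implicit: the importance-reweighting step $\E_{\pi_{\text{old}}}[\rho_t\tilde A^{\pi_{\text{old}}}_t\mid s_t]=\E_{s'\sim\pi_\theta(\cdot\mid s_t)}[\tilde A^{\pi_{\text{old}}}(s_t,s')]$ for the surrogate identity, together with its absolute-continuity requirement, and the padding to a deterministic horizon with $\phi(\cdot\,;s_f)=0$, which justifies pulling the time sum outside the expectation.
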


\begin{proof}
Follows from applying the tower rule to \eqref{eq:ent_ppo_unclipped} and \eqref{eq:soft_pdl} with the inner expectation taken over $s_{t+1}\sim\pi_\theta(\cdot\mid s_t)$, the integrand becomes exactly $\phi(\pi_\theta;s_t)$ (the KL term in \eqref{eq:soft_pdl} is already $\sigma(s_t)$-measurable).
\end{proof}

\eqref{eq:decompose_tilde} make the surrogate-vs-true relationship transparent: both quantities are sums over time of the \emph{same} per-state functional $\phi(\pi_\theta;\cdot)$, evaluated under different state distributions ($d^{\pi_{\text{old}}}_t$ for the surrogate, $d^{\pi_\theta}_t$ for the truth). They coincide at $\theta = \theta_{\text{old}}$, where both vanish, and agree to first order in $\theta$ at $\theta_{\text{old}}$ — the same property that justifies CPI, TRPO, and PPO in the unregularised case.

\paragraph{Policy improvement guarantees.}
We now bound the surrogate-vs-true error and use it to derive a CPI-style monotone improvement guarantee for Ent-PPO. The key observation is that the per-state functional $\phi$ admits a clean decomposition into two non-negative KL terms.

Let $\pi^\sharp_{\text{old}}$ denote the soft greedy of $\pi_{\text{old}}$, i.e.\ the unique maximizer in \eqref{eq:soft_pi_state} at $\alpha = 1$ when applied to $\pi_{\text{old}}$:
\begin{equation}\label{eq:soft_greedy_def}
    \pi^\sharp_{\text{old}}(s'\mid s) \,=\, \frac{\exp\!\big(\tilde Q^{\pi_{\text{old}}}(s,s')\big)}{\sum_{u\in\cA_s}\exp\!\big(\tilde Q^{\pi_{\text{old}}}(s,u)\big)}\,, \qquad s\in\cS,\; s'\in\cA_s.
\end{equation}
% \todoDT{Is it policy mirror descent?}

\begin{proposition}[Two-term decomposition of $\phi$]\label{prop:phi_decomposition}
For every $s\in\cS$,
\begin{equation}\label{eq:phi_decomposition}
    \phi(\pi_\theta;s) \,=\, \KL\!\big(\pi_{\text{old}}(\cdot\mid s)\,\|\,\pi^\sharp_{\text{old}}(\cdot\mid s)\big) \,-\, \KL\!\big(\pi_\theta(\cdot\mid s)\,\|\,\pi^\sharp_{\text{old}}(\cdot\mid s)\big).
\end{equation}
\end{proposition}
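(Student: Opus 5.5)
Fix a state $s$ and write $Z(s)\triangleq\sum_{u\in\cA_s}\exp(\tilde Q^{\pi_{\text{old}}}(s,u))$, so that by \eqref{eq:soft_greedy_def} we have $\log\pi^\sharp_{\text{old}}(s'\mid s)=\tilde Q^{\pi_{\text{old}}}(s,s')-\log Z(s)$. The plan is to expand both sides of \eqref{eq:phi_decomposition} into expectations of $\tilde Q^{\pi_{\text{old}}}$, $\log\pi_{\text{old}}$, $\log\pi_\theta$ and $\log Z(s)$, and then match terms. The only inputs are the definition \eqref{eq:phi_def}, the advantage definition \eqref{eq:qfunction_advantage} at $\alpha=1$, and the standard fact that the soft value is an entropy-regularized expectation of $\tilde Q$.

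First, I will rewrite the left-hand side. Substituting $\tilde A^{\pi_{\text{old}}}(s,s')=\tilde Q^{\pi_{\text{old}}}(s,s')-\tilde V^{\pi_{\text{old}}}(s)-\log\pi_{\text{old}}(s'\mid s)$ into \eqref{eq:phi_def}, the $-\log\pi_{\text{old}}$ term cancels against the cross-entropy part of $-\KL(\pi_\theta\|\pi_{\text{old}})$. This is exactly the combination \eqref{eq:kl_combination} read backwards. What remains is
\[
\phi(\pi_\theta;s)=\E_{s'\sim\pi_\theta(\cdot\mid s)}\big[\tilde Q^{\pi_{\text{old}}}(s,s')-\log\pi_\theta(s'\mid s)\big]-\tilde V^{\pi_{\text{old}}}(s).
\]

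Second, I will expand the right-hand side using the expression for $\log\pi^\sharp_{\text{old}}$. This gives
\[
-\KL(\pi_\theta\|\pi^\sharp_{\text{old}})=\E_{\pi_\theta}\big[\tilde Q^{\pi_{\text{old}}}(s,s')-\log\pi_\theta(s'\mid s)\big]-\log Z(s),
\]
and similarly
\[
\KL(\pi_{\text{old}}\|\pi^\sharp_{\text{old}})=\log Z(s)-\E_{\pi_{\text{old}}}\big[\tilde Q^{\pi_{\text{old}}}(s,s')-\log\pi_{\text{old}}(s'\mid s)\big].
\]
Summing the two, the $\log Z(s)$ terms cancel, and the $\pi_\theta$-expectation matches the one from the first step.

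Third, it remains to check that
\[
\E_{s'\sim\pi_{\text{old}}}\big[\tilde Q^{\pi_{\text{old}}}(s,s')-\log\pi_{\text{old}}(s'\mid s)\big]=\tilde V^{\pi_{\text{old}}}(s).
\]
This is the soft Bellman expectation equation, and it is equivalent to the normalization $\E_{a\sim\pi}[A^\pi_\alpha(s,a)]=0$ noted after \eqref{eq:qfunction_advantage}. It also follows directly from the sample-path form of \eqref{eq:value_def} by conditioning on the first transition, exactly as in \cref{app:advantage_pg}.

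I expect no real obstacle; the main care point is this third step. With the action-as-successor convention, $\tilde Q^{\pi_{\text{old}}}(s,s')=r(s,s')+\tilde V^{\pi_{\text{old}}}(s')$ carries no entropy term at $s$, so the $-\log\pi_{\text{old}}$ term must be present for the identity to hold. At terminal states with the single successor $s_f$, both sides vanish trivially.
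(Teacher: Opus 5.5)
Your proposal is correct and is essentially the paper's argument. Both rewrite $\phi(\pi_\theta;s)$ as $\E_{s'\sim\pi_\theta}[\tilde Q^{\pi_{\text{old}}}(s,s') - \log\pi_\theta(s'\mid s)] - \tilde V^{\pi_{\text{old}}}(s)$, write $\log\pi^\sharp_{\text{old}} = \tilde Q^{\pi_{\text{old}}} - \log Z(s)$, and use the soft Bellman expectation equation at $\pi_{\text{old}}$ to eliminate the log-partition term. The differences are cosmetic: the paper first solves for $\Lambda(s) = \tilde V^{\pi_{\text{old}}}(s) + \KL(\pi_{\text{old}}\,\|\,\pi^\sharp_{\text{old}})$ and substitutes, whereas you expand both KL terms and cancel $\log Z(s)$ directly; you also derive the rewriting of $\phi$ explicitly instead of citing an earlier calculation.
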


\begin{proof}
Set $\Lambda(s) \triangleq \log\sum_{u\in\cA_s}\exp(\tilde Q^{\pi_{\text{old}}}(s,u))$, so that \eqref{eq:soft_greedy_def} reads $\log\pi^\sharp_{\text{old}}(s'\mid s) = \tilde Q^{\pi_{\text{old}}}(s,s') - \Lambda(s)$. For any policy $\pi$,
\begin{align}
    \E_{s'\sim\pi}\!\big[\tilde Q^{\pi_{\text{old}}}(s,s')\big] + \cH(\pi(\cdot\mid s))
    &= \E_{s'\sim\pi}\!\big[\log\pi^\sharp_{\text{old}}(s'\mid s) + \Lambda(s)\big] - \E_{s'\sim\pi}\!\big[\log\pi(s'\mid s)\big] \notag\\
    &= \Lambda(s) - \KL\!\big(\pi(\cdot\mid s)\,\|\,\pi^\sharp_{\text{old}}(\cdot\mid s)\big).
    \label{eq:LSE_KL_identity}
\end{align}
Applied with $\pi = \pi_{\text{old}}$, the left-hand side of \eqref{eq:LSE_KL_identity} equals $\tilde V^{\pi_{\text{old}}}(s)$ by the soft Bellman equation at $\alpha = 1$, so
$    \Lambda(s) = \tilde V^{\pi_{\text{old}}}(s) + \KL\!\big(\pi_{\text{old}}(\cdot\mid s)\,\|\,\pi^\sharp_{\text{old}}(\cdot\mid s)\big).$
Substituting back into \eqref{eq:LSE_KL_identity} at $\pi = \pi_\theta$ and rearranging,
\begin{equation*}
    \E_{s'\sim\pi_\theta}\!\big[\tilde Q^{\pi_{\text{old}}}(s,s')\big] + \cH(\pi_\theta(\cdot\mid s)) - \tilde V^{\pi_{\text{old}}}(s)
    = \KL(\pi_{\text{old}}\,\|\,\pi^\sharp_{\text{old}}) - \KL(\pi_\theta\,\|\,\pi^\sharp_{\text{old}}).
\end{equation*}
The left-hand side equals $\phi(\pi_\theta;s)$ by the calculation in the proof of \cref{lem:surrogate_eq}.
\end{proof}

The decomposition \eqref{eq:phi_decomposition} has a crisp interpretation. The first term, $\KL(\pi_{\text{old}}\,\|\,\pi^\sharp_{\text{old}})$, is independent of $\theta$ and quantifies how far $\pi_{\text{old}}$ is from being soft-optimal — it vanishes if and only if $\pi_{\text{old}}$ is a fixed point of soft policy improvement. The second term penalises deviation of $\pi_\theta$ from the soft-greedy target $\pi^\sharp_{\text{old}}$, attaining its global minimum (zero) at $\pi_\theta = \pi^\sharp_{\text{old}}$, which is the exact soft-policy-iteration step. Combining Lemma~\ref{lem:surrogate_eq} and Proposition~\ref{prop:phi_decomposition},
\begin{equation}\label{eq:Lsoft_KLs}
    \tilde{\cL}_{\text{soft}}(\theta;\pi_{\text{old}})
    \,=\, \cE(\pi_{\text{old}}) \,-\, \sum_{t=0}^{T-1}\E_{s_t\sim d^{\pi_{\text{old}}}_t}\!\big[\KL\!\big(\pi_\theta(\cdot\mid s_t)\,\|\,\pi^\sharp_{\text{old}}(\cdot\mid s_t)\big)\big],
\end{equation}
where
\begin{equation}\label{eq:E_def}
    \cE(\pi_{\text{old}})
    \,\triangleq\, \sum_{t=0}^{T-1}\E_{s_t\sim d^{\pi_{\text{old}}}_t}\!\big[\KL\!\big(\pi_{\text{old}}(\cdot\mid s_t)\,\|\,\pi^\sharp_{\text{old}}(\cdot\mid s_t)\big)\big] \,\ge\, 0
\end{equation}
is the soft Bellman residual of $\pi_{\text{old}}$ averaged along its own occupancy. The supremum of \eqref{eq:Lsoft_KLs} over $\theta$ equals $\cE(\pi_{\text{old}})$, attained at $\pi_\theta = \pi^\sharp_{\text{old}}$, which yields the following lemma.

\begin{lemma}[Soft fixed point]\label{lem:soft_fixed_point}
$\max_{\theta} \tilde{\cL}(\theta;\pi_{\text{old}}) = \cE(\pi_{\text{old}}) = 0$ if and only if $\pi_{\text{old}}(\cdot\mid s) = \pi^\sharp_{\text{old}}(\cdot\mid s)$ for every $s$ reachable under $\pi_{\text{old}}$. In that case $\pi_{\text{old}}$ is the soft-optimal policy and induces the target sampling distribution $\cR/\rmZ$ over $\cX$.
\end{lemma}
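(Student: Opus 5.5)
The argument splits into three parts. Parts one and two follow from \eqref{eq:Lsoft_KLs} and \eqref{eq:E_def}. Part three is a soft policy iteration argument that identifies the fixed point with $\pistar_{\alpha=1}$.

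\emph{The maximum equals $\cE(\pi_{\text{old}})$.} By \eqref{eq:Lsoft_KLs}, $\tilde{\cL}_{\text{soft}}(\theta;\pi_{\text{old}}) = \cE(\pi_{\text{old}}) - \sum_t \E_{d^{\pi_{\text{old}}}_t}[\KL(\pi_\theta\|\pi^\sharp_{\text{old}})]$. The subtracted term is non-negative, so the surrogate is at most $\cE(\pi_{\text{old}})$. Equality holds when $\pi_\theta = \pi^\sharp_{\text{old}}$ on all reachable states. This assumes the parametrization is rich enough to represent the softmax policy \eqref{eq:soft_greedy_def}, which I state explicitly (tabular or softmax-complete). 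It also needs $\pi_{\text{old}}$ to have full support on $\cA_s$. Otherwise $\KL(\pi_\theta\|\pi_{\text{old}})$ inside $\phi$ is infinite off-support, and \cref{prop:phi_decomposition} must be read with that convention.

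\emph{$\cE = 0$ iff fixed point on reachable states.} $\cE(\pi_{\text{old}})$ is a sum over $t$ of expectations under $d^{\pi_{\text{old}}}_t$ of non-negative KL terms. It therefore vanishes iff $\KL(\pi_{\text{old}}(\cdot\mid s)\|\pi^\sharp_{\text{old}}(\cdot\mid s))=0$ for every $s$ with $d^{\pi_{\text{old}}}_t(s)>0$ for some $t$. In finite spaces this holds iff $\pi_{\text{old}}(\cdot\mid s)=\pi^\sharp_{\text{old}}(\cdot\mid s)$ on all reachable $s$.

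\emph{Fixed point implies soft optimality and correct sampling.} Suppose $\pi_{\text{old}} = \pi^\sharp_{\text{old}}$ on reachable states. Then $\tilde V^{\pi_{\text{old}}}(s) = \Lambda(s) = \log\sum_{u}\exp \tilde Q^{\pi_{\text{old}}}(s,u)$, by \eqref{eq:LSE_KL_identity} with the KL term zero. Combined with $\tilde Q^{\pi_{\text{old}}}(s,s') = r(s,s')+\tilde V^{\pi_{\text{old}}}(s')$, this is the soft Bellman optimality equation on the reachable set. Since the DAG is finite and acyclic, this equation has a unique solution determined by backward induction from the terminal boundary $\tilde V(x)=\log\cR(x)$, $\tilde V(s_f)=0$.

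Hence $\tilde V^{\pi_{\text{old}}} = V^{\pistar}_{\alpha=1}$ on reachable states, and in particular $\tilde J(\pi_{\text{old}}) = \log\rmZ$. The cleanest way to conclude is through the KL identity of \cite[Proposition 1]{tiapkin2024generative}: $\KL(\PF\|\PB) = \log\rmZ - \tilde J(\pi_{\text{old}}) = 0$. So the trajectory distribution of $\pi_{\text{old}}$ equals $\PB(\tau)$, and its terminal marginal is $\cR(x)/\rmZ$.

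\emph{Main obstacle.} The delicate point is reachability. States unreachable under $\pi_{\text{old}}$ are unconstrained, so optimality holds only on the reachable set. I handle this by noting two facts. With full support, which softmax policies have, every state is reachable. And if $\pi_{\text{old}}=\pi^\sharp_{\text{old}}$, then $\pi_{\text{old}}$ is a softmax of finite $Q$-values and hence has full support. Backward induction from the terminals then covers the whole DAG.
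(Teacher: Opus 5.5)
Your proposal is correct and follows the paper's proof. Non-negativity of the summands of $\cE(\pi_{\text{old}})$ gives the fixed-point characterization on reachable states; that fixed point is the soft Bellman optimality equation, whose unique solution is $\pistar_{\alpha=1}$; and the soft-RL/GFlowNet equivalence then yields $\cR/\rmZ$. Your version is more self-contained: backward induction on the finite DAG replaces the paper's external uniqueness citation, and the full-support argument shows every state is reachable at a fixed point. Note also that your richness assumption on the parametrization is only needed for the general identity $\max_\theta \tilde{\cL} = \cE(\pi_{\text{old}})$, not for the equivalence itself, since when $\cE(\pi_{\text{old}})=0$ the maximum $0$ is attained at $\theta=\theta_{\text{old}}$.
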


\begin{proof}
Each summand in \eqref{eq:E_def} is non-negative, so $\cE(\pi_{\text{old}}) = 0$ iff $\KL(\pi_{\text{old}}(\cdot\mid s)\,\|\,\pi^\sharp_{\text{old}}(\cdot\mid s)) = 0$ for every $s$ with $d^{\pi_{\text{old}}}_t(s) > 0$ for some $t$, which is equivalent to $\pi_{\text{old}} = \pi^\sharp_{\text{old}}$ on those states. By \eqref{eq:soft_greedy_def} this is the soft Bellman optimality equation at $\alpha=1$, whose unique solution on reachable states is $\pi^\star_{\alpha=1}$ (\citealt{geist2019theory}, Theorem~1). The equivalence in \cref{sec:equivalence} (see also \citealt[Proposition 1]{tiapkin2024generative}) then identifies the induced terminal distribution with $\cR/\rmZ$.
\end{proof}

We are now in position to bound the surrogate-vs-true error.

\begin{proposition}[Surrogate error bound]\label{prop:surrogate_error}
Let
\begin{equation*}
    \bar\delta(\theta) \triangleq \max_{0\le t \le T-1}\,\E_{s\sim d^{\pi_{\text{old}}}_t}\!\big[\sqrt{\KL\!\big(\pi_\theta(\cdot\mid s)\,\|\,\pi_{\text{old}}(\cdot\mid s)\big)}\big]\,,
    \quad
    M(\theta) \triangleq \max_{s\in\cS}\,\big|\phi(\pi_\theta;s)\big|\,.
\end{equation*}
Then
\begin{equation}\label{eq:surrogate_error_bound}
    \big|\,\tilde J(\pi_\theta) - \tilde J(\pi_{\text{old}}) - \tilde{\cL}_{\text{soft}}(\theta;\pi_{\text{old}})\big|
    \;\le\; \tfrac{T(T-1)}{2}\, M(\theta)\, \bar\delta(\theta),
\end{equation}
and $M(\theta) \le \|\KL(\pi_{\text{old}}\,\|\,\pi^\sharp_{\text{old}})\|_\infty + \|\KL(\pi_\theta\,\|\,\pi^\sharp_{\text{old}})\|_\infty < \infty$, where the supremums are over $\cS$.
\end{proposition}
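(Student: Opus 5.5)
By \cref{lem:surrogate_eq}, the quantity to bound is
\[
\Delta \triangleq \sum_{t=0}^{T-1}\Big(\E_{s\sim d^{\pi_\theta}_t}\big[\phi(\pi_\theta;s)\big]-\E_{s\sim d^{\pi_{\text{old}}}_t}\big[\phi(\pi_\theta;s)\big]\Big).
\]
Since $|\phi(\pi_\theta;\cdot)|\le M(\theta)$, each summand is at most $2M(\theta)\,\mathrm{TV}(d^{\pi_\theta}_t,d^{\pi_{\text{old}}}_t)$. The proof then reduces to controlling how far the two state marginals drift apart in total variation over $t$ steps.

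\textbf{Step 1: a coupling/telescoping bound on the marginals.} Write $P_\pi$ for the one-step state kernel, so $d^\pi_{t+1}=d^\pi_t P_\pi$. Then
\[
d^{\pi_\theta}_{t}-d^{\pi_{\text{old}}}_{t}=\sum_{k=0}^{t-1} d^{\pi_{\text{old}}}_k\,(P_{\pi_\theta}-P_{\pi_{\text{old}}})\,P_{\pi_\theta}^{\,t-1-k}.
\]
Because Markov kernels are TV-contractions,
\[
\mathrm{TV}\big(d^{\pi_\theta}_t,d^{\pi_{\text{old}}}_t\big)\le\sum_{k=0}^{t-1}\E_{s\sim d^{\pi_{\text{old}}}_k}\Big[\mathrm{TV}\big(\pi_\theta(\cdot\mid s),\pi_{\text{old}}(\cdot\mid s)\big)\Big].
\]
The important choice here is to expand around the \emph{old} occupancies. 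This is what allows $\bar\delta(\theta)$, which is defined under $d^{\pi_{\text{old}}}_k$, to appear.

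\textbf{Step 2: Pinsker.} Pinsker gives $\mathrm{TV}\le\sqrt{\KL/2}$, so each inner expectation is at most $\bar\delta(\theta)/\sqrt2$. Hence $\mathrm{TV}(d^{\pi_\theta}_t,d^{\pi_{\text{old}}}_t)\le t\,\bar\delta(\theta)/\sqrt2$.

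\textbf{Step 3: summing over $t$.} Summing gives
\[
|\Delta|\le 2M(\theta)\cdot\frac{\bar\delta(\theta)}{\sqrt2}\sum_{t=0}^{T-1}t=\sqrt2\,M\bar\delta\,\frac{T(T-1)}{2}.
\]
This is off by a factor $\sqrt2$. The fix is to use the sharper estimate $|\E_p f-\E_q f|\le \|f\|_\infty\|p-q\|_1 = 2\|f\|_\infty \mathrm{TV}$ only together with a tighter TV bound. Concretely, I would bound $\|d^{\pi_\theta}_t-d^{\pi_{\text{old}}}_t\|_1\le\sum_k\E\|\pi_\theta-\pi_{\text{old}}\|_1$ and use Pinsker in the form $\|p-q\|_1\le\sqrt{2\KL}$, paired with $|\E_pf-\E_qf|\le \tfrac12\|p-q\|_1\cdot(\sup f-\inf f)\le \|p-q\|_1 M$. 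This still leaves $\sqrt2$.

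So the constant-matching is the main obstacle. I would check whether the authors intend Pinsker without the $1/2$, or a bound of the form $|\E_p f-\E_q f|\le M\,\mathrm{TV}$ valid when $f$ has range within $[-M,M]$ after centering. Failing that, I would present the bound with the honest constant, which is absorbed into the statement's spirit. The $t=0$ term vanishes since $d_0=\delta_{\sinit}$ for both policies, which accounts for the factor $T(T-1)/2$ rather than $T^2$.

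\textbf{Finiteness of $M$.} This follows directly from \cref{prop:phi_decomposition} and the triangle inequality. It gives $|\phi(\pi_\theta;s)|\le\KL(\pi_{\text{old}}\|\pi^\sharp_{\text{old}})(s)+\KL(\pi_\theta\|\pi^\sharp_{\text{old}})(s)$, and taking suprema over $s$ gives the claimed bound on $M(\theta)$. The right-hand side is finite because $\cS$ is finite and $\pi^\sharp_{\text{old}}$ has full support on each $\cA_s$ by \eqref{eq:soft_greedy_def}, the rewards being bounded.
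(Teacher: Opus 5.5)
Your route is the paper's route: you write the error via \cref{lem:surrogate_eq} as $\sum_t\langle d^{\pi_\theta}_t-d^{\pi_{\text{old}}}_t,\phi(\pi_\theta;\cdot)\rangle$, bound the marginal mismatch by the telescoping occupancy inequality expanded around $d^{\pi_{\text{old}}}_k$, apply Pinsker, sum over $t$, and control $M(\theta)$ through \cref{prop:phi_decomposition}. Your bookkeeping is correct. The $\sqrt2$ you cannot remove is not a gap in your argument.

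The paper reaches the stated constant (its proof actually lands below it, at $\tfrac{T(T-1)}{2\sqrt2}$) only through one step. It bounds each summand by $M(\theta)\,\mathrm{TV}(d^{\pi_\theta}_t,d^{\pi_{\text{old}}}_t)$ while using Pinsker as $\mathrm{TV}\le\sqrt{\mathrm{KL}/2}$, i.e.\ with $\mathrm{TV}=\tfrac12\|\cdot\|_1$. Under that convention the correct factor is $2M(\theta)$, as you wrote. Fixing it turns the paper's $\tfrac{T(T-1)}{2\sqrt2}$ into your $\tfrac{T(T-1)}{\sqrt2}$. Neither repair you consider can work:
\begin{itemize}
\item Pinsker's constant is sharp for small perturbations of a balanced distribution.
\item Centering is useless, because $|\mathbb{E}_pf-\mathbb{E}_qf|\le\tfrac12(\sup f-\inf f)\|p-q\|_1$ and $\phi(\pi_\theta;\cdot)$ can take both values $\pm M(\theta)$.
\end{itemize}

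In fact the inequality with $\tfrac{T(T-1)}{2}$ is false for long horizons, so you should not present your bound as matching it ``in spirit''. Here is a counterexample.
\begin{itemize}
\item Take a tree DAG. Then $P_B\equiv1$, intermediate rewards vanish, and leaf rewards can be chosen to realise any prescribed gaps between sibling advantages of $\pi_{\text{old}}$.
\item Along a spine $m_0=\sinit,\dots,m_{n-2}$, let $m_k$ have children $u_k,v_k,m_{k+1}$ with $\pi_{\text{old}}=(a,a,1-2a)$ and $\pi_\theta=(a+\eta_k,a-\eta_k,1-2a)$, where $\eta_k=\sqrt a\,\beta\,(1-2a)^{-k}$. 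Each spine step then contributes $\mathbb{E}_{d^{\pi_{\text{old}}}_k}\sqrt{\mathrm{KL}}\approx\beta$ and $\mathbb{E}_{d^{\pi_{\text{old}}}_k}\mathrm{TV}=\sqrt a\,\beta$.
\item Make every off-spine internal node binary, with a deviation $\xi\to0$ but an advantage gap of order $M/\xi$. Arrange it so that $\phi=+M$ below each $u_k$, $\phi=-M$ below each $v_k$, and $\phi=0$ on the spine.
\end{itemize}
No path carries two non-negligible deviations, so nothing cancels. For $1\le t\le n-1$,
\[
\langle d^{\pi_\theta}_t-d^{\pi_{\text{old}}}_t,\phi\rangle=2M\sum_{k<t}(1-2a)^k\eta_k=2\sqrt a\,M\beta\,t,
\]
while $M(\theta)=M$ and $\bar\delta(\theta)/\beta\to1$ as $\beta,\xi\to0$. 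With $T=n+1$ the error equals $\sqrt a\,M\beta\,(T-1)(T-2)$, which is $2\sqrt a\,(T-2)/T$ times the claimed bound. This ratio exceeds $1$ already for $a=0.45$, $T=12$. It tends to $\sqrt2$ as $a\to\tfrac12$ and $T\to\infty$, so your constant is asymptotically sharp.

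The right conclusion is to state explicitly that the proposition, and the last term of \eqref{eq:improvement_lower_bound}, must be corrected to $\tfrac{T(T-1)}{\sqrt2}\,M(\theta)\,\bar\delta(\theta)$. Your argument proves that corrected bound. Your treatment of $M(\theta)<\infty$ is fine as written.
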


\begin{proof}
Subtracting two equations of \eqref{eq:decompose_tilde},
\begin{equation}\label{eq:error_as_dual}
    \tilde J(\pi_\theta) - \tilde J(\pi_{\text{old}}) - \tilde{\cL}_{\text{soft}}(\theta;\pi_{\text{old}}) 
    \,=\, \sum_{t=0}^{T-1}\big\langle d^{\pi_\theta}_t - d^{\pi_{\text{old}}}_t,\; \phi(\pi_\theta;\cdot)\big\rangle\,.
\end{equation}
Bound each summand by $|\langle\cdot,\cdot\rangle|\le M(\theta)\cdot\mathrm{TV}(d^{\pi_\theta}_t,d^{\pi_{\text{old}}}_t)$. The standard occupancy-mismatch inequality for finite-horizon MDPs (\citealt{schulman2015trust}, Lemma~3) gives
\begin{equation*}
    \mathrm{TV}\big(d^{\pi_\theta}_t,\,d^{\pi_{\text{old}}}_t\big)
    \;\le\; \sum_{t'=0}^{t-1}\E_{s\sim d^{\pi_{\text{old}}}_{t'}}\!\big[\mathrm{TV}\big(\pi_\theta(\cdot\mid s),\,\pi_{\text{old}}(\cdot\mid s)\big)\big],
\end{equation*}
so $\sum_{t=0}^{T-1}\mathrm{TV}(d^{\pi_\theta}_t,d^{\pi_{\text{old}}}_t)\le \frac{T(T-1)}{2}\max_{t'}\E_{d^{\pi_{\text{old}}}_{t'}}[\mathrm{TV}(\pi_\theta,\pi_{\text{old}})] \leq \frac{T(T-1)}{2\sqrt{2}} \cdot \bar\delta(\theta)$, where the last inequality follows from Pinsker's inequality. The bound on $M(\theta)$ is immediate from \eqref{eq:phi_decomposition} and the triangle inequality, with finiteness following from $\pi^\sharp_{\text{old}}$ having full support on every $\cA_s$.
\end{proof}

\paragraph{Monotone improvement.}
Combining \eqref{eq:Lsoft_KLs} with \eqref{eq:surrogate_error_bound} yields the central inequality
\begin{equation}\label{eq:improvement_lower_bound}
    \tilde J(\pi_\theta) - \tilde J(\pi_{\text{old}})
    \,\ge\, \underbrace{\cE(\pi_{\text{old}})
    \,-\, \sum_{t=0}^{T-1}\E_{d^{\pi_{\text{old}}}_t}\!\big[\KL(\pi_\theta\,\|\,\pi^\sharp_{\text{old}})\big]}_{=\,\tilde\cL_{\text{soft}}(\theta;\pi_{\text{old}})\ \text{by \eqref{eq:Lsoft_KLs}}}
    \,-\, \tfrac{T(T-1)}{2\sqrt 2}\, M(\theta)\, \bar\delta(\theta).
\end{equation}
The two negative terms call for fundamentally different mechanisms.

The first expression is the soft surrogate itself; \cref{prop:phi_decomposition} rewrites it as a difference of two non-negative KLs that makes the soft-policy-iteration target $\pi^\sharp_{\text{old}}$ visible. Its supremum over $\theta$ is $\cE(\pi_{\text{old}})\ge 0$, attained at $\pi_\theta = \pi^\sharp_{\text{old}}$, and it equals zero at $\pi_\theta = \pi_{\text{old}}$; \cref{lem:soft_fixed_point} shows it furthermore vanishes uniformly only at the soft-optimal policy. Crucially, the analytic KL penalty $\KL(\pi_\theta\,\|\,\pi_{\text{old}})$ in \eqref{eq:ent_ppo} is \emph{intrinsic} to this term: it is the regulariser that emerges from soft policy improvement via the entropy-cross-entropy identity \eqref{eq:kl_combination}, and maximising $\tilde\cL_{\text{soft}}(\theta;\pi_{\text{old}})$ already balances the soft advantage against it. The surrogate is therefore self-regularised — no additional mechanism is needed to keep its maximiser well-defined.

The second negative term tells a different story. It is controlled by $\bar\delta(\theta)$, the expected $\KL(\pi_\theta\,\|\,\pi_{\text{old}})$ along $d^{\pi_{\text{old}}}$, which introduces a necessity for an \emph{additional} trust-region mechanism beyond the one already present in $\tilde\cL_{\text{soft}}(\theta;\pi_{\text{old}})$, and \emph{importance-ratio clipping plays exactly this role}. This explains, at the level of the bound, why removing either component is harmful — the KL penalty (\cref{fig:EntPPO_vs_PPO}) and the clipping (\cref{fig:EntPPO_no_clip}) — and why the effect of each is most pronounced with multiple update epochs per batch.

\paragraph{Adaptive trust region.}
The relative importance of the two mechanisms is not constant along training. As $\pi_{\text{old}}$ approaches the soft-optimal policy, $\cE(\pi_{\text{old}}) \to 0$ and $\pi^\sharp_{\text{old}} \to \pi_{\text{old}}$ by \cref{lem:soft_fixed_point}, so the deviation $\KL(\pi_\theta\,\|\,\pi_{\text{old}})$ that the inner-loop maximiser would naturally produce shrinks; the trust-region term $M(\theta)\bar\delta(\theta)$ in \eqref{eq:improvement_lower_bound} is then also goes to zero and the additional control provided by clipping becomes correspondingly less consequential. The mechanism is most needed early in training, when $\cE(\pi_{\text{old}})$ is large and the surrogate maximiser pushes $\pi_\theta$ aggressively towards $\pi^\sharp_{\text{old}}$. This is in contrast to standard PPO in unregularised RL, where the analogous bound contains no quantity that vanishes at the optimal policy, so the trust-region error term remains comparably significant throughout training.

\paragraph{Scope of the guarantee.}
We view \eqref{eq:improvement_lower_bound} as a CPI-style motivation for the algorithmic structure of Ent-PPO rather than a convergence theorem: it is a per-step bound, the constants $M(\theta)$ and $\bar\delta(\theta)$ depend on $\pi_{\text{old}}$, and the inner-loop optimiser only approximately maximises the surrogate.

\newpage
\section{Hyperparameters} \label{app:hyperparameters}

All hyperparameters are summarised in~\cref{tab:hyperparams}.

\subsection{Synthetic problems}

In this section, all methods are implemented in the gfnx library~\citep{tiapkin2025gfnx} and trained
with the Adam optimizer  with default momentum parameters.
The backward policy $\PB$ is fixed to be uniform over parents in experiments where it is not learned. All experiments are conducted on CPU.

\paragraph{Batch size and training budget.}
All methods use a batch size of 16 trajectories per iteration. Hypergrid is trained for 31\,250 iterations ($5 \times 10^5$ total trajectories);
TFBind8 for 200\,000 iterations ($3.2 \times 10^6$ total trajectories);
QM9 for 50\,000 iterations ($8 \times 10^5$ total trajectories).
All experiments are repeated over 3 random seeds.

\paragraph{Network architectures.}
All policy and value networks share the same hidden size of 256 and depth of 2.
In all cases, the value baseline network mirrors the policy encoder architecture with a scalar output head; the two networks do not share parameters.

\paragraph{Policy learning rate.}
We tune the policy learning rate on the off-policy baselines (DB, TB, SubTB) over the grid $\{10^{-3},\, 3\times10^{-4},\, 10^{-4}\}$ using 3 seeds, selecting the value that minimizes AUC. This yields $10^{-3}$ for Hypergrid and $3\times10^{-4}$ for TFBind8 and QM9. All methods --- VPG variants, Ent-PPO, DB, TB, and SubTB---use these same policy learning rates on the respective environments.

\paragraph{TB.} $\log Z_\theta$ is trained with a separate Adam optimizer with learning rate $10^{-1}$.

\paragraph{SubTB.} We set SubTB-$\lambda$ to $0.9$ following~\cite{madan2023learning}.

\paragraph{Ent-PPO.} The PPO clipping parameter is set to $\varepsilon = 0.2$.

\paragraph{Value baseline.}
For VPG (Value Baseline), we tune the value network learning rate from $\{\text{lr}_\pi,\, \text{lr}_\pi / 3\}$, the number of value-network training epochs per rollout batch from $\{1, 2\}$, and the number of mini-batch splits within each epoch from $\{1, 2\}$ (see \cref{sec:baseline} for the definition of these quantities), using 3 seeds on each environment.
The selected configuration is: value learning rate $= \text{lr}_\pi$, 2 epochs, 1 split.

\paragraph{GAE.}
Based on the grid search described in \cref{sec:exp_gae}, we tune the GAE parameter $\lambda \in \{0.5, 0.6, 0.7, 0.8, 0.9\}$, value-network training epochs $E \in \{1, 2, 4, 8\}$, and mini-batch splits $S \in \{1, 2, 4, 8\}$ for VPG on Hypergrid.
The selected configuration is $(\lambda, E, S) = (0.7, 4, 2)$ with value learning rate $\text{lr}_\pi / 3$. This configuration is applied for Ent-PPO and TRPO across all environments.

\subsection{Molecular graph generation.}

For molecular graph experiments, all methods are implemented in the gflownet-recursionpharma library and trained with the Adam optimizer using default momentum parameters. The backward policy $\PB$ is fixed to be uniform over parents in all experiments. All experiments are conducted on NVIDIA GPU A100 with 40GB memory. Each individual run completed in under 48 hours of wall-clock time. For stable metrics evaluation, we instantiate a target network with Exponential Moving Average (EMA) and $\tau = 0.05$, which is used for ELBO and EUBO evaluation.

\paragraph{Batch size and training budget.}
sEH uses a batch size of $256$ trajectories per iteration and QM9 uses $128$. All experiments are run for $1\,000$ iterations and repeated over 3 random seeds.

\paragraph{Network architectures.}
All policy and value networks share the same hidden size of $128$ and depth of $4$. The value baseline network mirrors the policy encoder architecture with a scalar output head; the two networks do not share parameters.

\paragraph{Policy learning rate.}
All experiments use the same standard learning rate $10^{-4}$.

\paragraph{TB.} $\log Z_\theta$ is trained with a separate Adam optimizer at learning rate $10^{-3}$.

\paragraph{SubTB.} SubTB-$\lambda = 1$.

\paragraph{Ent-PPO.} Interestingly, hyperparameters found on synthetic problems transfer well to the larger molecular graph environments and require little additional tuning. The PPO clipping parameter is set to $\varepsilon = 0.2$. The selected configuration is $(\lambda, E, S) = (0.7, 4, 8)$ with value learning rate $\text{lr}_\pi / 3$, applied to both sEH and QM9. The values of $\lambda$ and $E$ match those selected on the synthetic problems; only the number of mini-batch splits $S$ increases (from $2$ to $8$), reflecting the larger per-iteration batch size. We verified this configuration by varying one hyperparameter at a time as in the previous section and observed no further improvement.

\begin{table}[ht]
\centering
\caption{Hyperparameters used in all experiments.}
\label{tab:hyperparams}
\footnotesize
\setlength{\tabcolsep}{4pt}
\begin{minipage}[t]{0.58\textwidth}
\centering
\subcaption{Synthetic experiments.}
\label{tab:hyperparams_synthetic}
\begin{tabular}{lccc}
\toprule
\textbf{Hyperparameter} & \textbf{Hypergrid} & \textbf{TFBind8} & \textbf{String QM9} \\
\midrule
\multicolumn{4}{l}{\textit{Shared across all methods}} \\
Optimizer                         & \multicolumn{3}{c}{Adam} \\
Batch size (trajectories)         & \multicolumn{3}{c}{16} \\
Hidden size                       & \multicolumn{3}{c}{256} \\
Network depth                     & \multicolumn{3}{c}{2} \\
Policy learning rate              & $10^{-3}$ & $3\times10^{-4}$ & $3\times10^{-4}$ \\
\midrule
\multicolumn{4}{l}{\textit{TB}} \\
$\log Z$ learning rate            & \multicolumn{3}{c}{$10^{-1}$} \\
\midrule
\multicolumn{4}{l}{\textit{SubTB}} \\
$\lambda$                         & \multicolumn{3}{c}{0.9} \\
\midrule
\multicolumn{4}{l}{\textit{Ent-PPO only}} \\
Clip $\varepsilon$                & \multicolumn{3}{c}{0.2} \\
\midrule
\multicolumn{4}{l}{\textit{VPG (GAE) / Ent-PPO}} \\
Value learning rate               & \multicolumn{3}{c}{$\text{lr}_\pi / 3$} \\
Value training epochs $E$         & \multicolumn{3}{c}{4} \\
GAE $\lambda$                     & \multicolumn{3}{c}{0.7} \\
Mini-batch splits $S$             & \multicolumn{3}{c}{2} \\
\midrule
\multicolumn{4}{l}{\textit{VPG (Value Baseline)}} \\
Value learning rate               & \multicolumn{3}{c}{$\text{lr}_\pi$} \\
Value training epochs $E$         & \multicolumn{3}{c}{2} \\
Mini-batch splits $S$             & \multicolumn{3}{c}{1} \\
\bottomrule
\end{tabular}
\end{minipage}\hfill
\begin{minipage}[t]{0.40\textwidth}
\centering
\subcaption{Molecular graph experiments.}
\label{tab:hyperparams_molecular}
\begin{tabular}{lcc}
\toprule
\textbf{Hyperparameter} & \textbf{sEH} & \textbf{QM9} \\
\midrule
\multicolumn{3}{l}{\textit{Shared across all methods}} \\
Optimizer                         & \multicolumn{2}{c}{Adam} \\
Batch size (trajectories)         & 256 & 128 \\
Hidden size                       & \multicolumn{2}{c}{128} \\
Network depth                     & \multicolumn{2}{c}{4} \\
Policy learning rate              & \multicolumn{2}{c}{$10^{-4}$} \\
\midrule
\multicolumn{3}{l}{\textit{TB}} \\
$\log Z$ learning rate            & \multicolumn{2}{c}{$10^{-3}$} \\
\midrule
\multicolumn{3}{l}{\textit{SubTB}} \\
$\lambda$                         & \multicolumn{2}{c}{1} \\
\midrule
\multicolumn{3}{l}{\textit{Ent-PPO only}} \\
Clip $\varepsilon$                & \multicolumn{2}{c}{0.2} \\
\midrule
\multicolumn{3}{l}{\textit{VPG (GAE) / Ent-PPO}} \\
Value learning rate               & \multicolumn{2}{c}{$\text{lr}_\pi / 3$} \\
Value training epochs $E$         & \multicolumn{2}{c}{4} \\
GAE $\lambda$                     & \multicolumn{2}{c}{0.7} \\
Mini-batch splits $S$             & \multicolumn{2}{c}{8} \\
\bottomrule
\end{tabular}
\end{minipage}
\end{table}

%\newpage

%\input{checklist.tex}

%%%%%%%%%%%%%%%%%%%%%%%%%%%%%%%%%%%%%%%%%%%%%%%%%%%%%%%%%%%%%%%%%%%%%%%%%%%%%%%
%%%%%%%%%%%%%%%%%%%%%%%%%%%%%%%%%%%%%%%%%%%%%%%%%%%%%%%%%%%%%%%%%%%%%%%%%%%%%%%

\end{document}